\documentclass[10pt]{article} 
\usepackage[accepted]{tmlr}


\usepackage{graphicx}
\usepackage{booktabs} 
\usepackage{natbib}


\usepackage{etoolbox}
\usepackage{algorithm}
\let\classAND\AND
\let\AND\relax
\usepackage{algorithmic}

\let\AND\classAND
\AtBeginEnvironment{algorithmic}{\let\AND\algoAND}

\usepackage{amsmath}
\usepackage{amssymb}
\usepackage{mathtools}
\usepackage{amsthm}

\usepackage[capitalize,noabbrev]{cleveref}

\usepackage[utf8]{inputenc} 
\usepackage[T1]{fontenc}    
\usepackage{url}            
\usepackage{amsfonts}       
\usepackage{nicefrac}       
\usepackage{microtype}      

\usepackage{algorithm}
\usepackage{algorithmic}
\usepackage{bbm}
\usepackage{bm}
\usepackage{color}
\usepackage{dsfont}
\usepackage{subcaption}
\usepackage{xspace}
\usepackage{apxproof}

\theoremstyle{plain}
\newtheorem{theorem}{Theorem}[section]
\newtheorem{proposition}[theorem]{Proposition}
\newtheorem{lemma}[theorem]{Lemma}

\theoremstyle{definition}

\theoremstyle{remark}


\usepackage[textsize=tiny]{todonotes}

\newcommand{\E}[1]{\mathbb{E} \left[#1\right]}
\newcommand{\condE}[2]{\mathbb{E} \left[#1 \,\middle|\, #2\right]}

\newcommand{\prob}[1]{\text{Pr} \left(#1\right)}

\mathchardef\mhyphen="2D
\newcommand{\rue}{\ensuremath{\tt RUE}\xspace}
\newcommand{\reucb}{\ensuremath{\tt ReUCB}\xspace}
\newcommand{\ucbe}{\ensuremath{\tt UCBE}\xspace}
\newcommand{\sh}{\ensuremath{\tt SH}\xspace}
\newcommand{\sr}{\ensuremath{\tt SR}\xspace}

\newcommand{\ttts}{\ensuremath{\tt TTTS}\xspace}


\title{UCB Exploration for Fixed-Budget Bayesian Best Arm Identification}

\author{\name Rong J.B. Zhu \email rongzhu@fudan.edu.cn\\
\addr Institute of Science and Technology for Brain-inspired Intelligence, Fudan University 
\AND
\name Yanqi Qiu \email yanqi.qiu@hotmail.com\\
\addr School of Mathematics and Statistics, Wuhan University
}



\begin{document}

\maketitle

\begin{abstract}
We study best-arm identification (BAI) in the fixed-budget setting. 
Adaptive allocations based on upper confidence bounds (UCBs), such as \ucbe, are known to work well in BAI. 
However, it is well-known that its optimal regret is theoretically dependent on instances, which we show to be an artifact in many fixed-budget BAI problems. 
In this paper we propose an UCB exploration algorithm that is both theoretically and empirically efficient for the fixed budget BAI problem under a Bayesian setting. 
The key idea is to learn prior information, which can enhance the performance of UCB-based BAI algorithm as it has done in the cumulative regret minimization problem. 
We establish bounds on the failure probability and the simple regret for the Bayesian BAI problem, 
providing upper bounds of order $\tilde{O}(\sqrt{K/n})$, up to logarithmic factors, where $n$ represents the budget and $K$ denotes the number of arms. 
Furthermore, we demonstrate through empirical results that our approach consistently outperforms state-of-the-art baselines.  
\end{abstract}

\section{Introduction}
\label{sec:introduction}

We study best-arm identification (BAI) in stochastic multi-armed bandits \citep{Audibert:10,Karnin:13,Even:06,Bubeck:09,jamieson:14,Kaufmann:15}. In this problem, the learning agent sequentially interacts with the environment by pulling arms and receiving their rewards, which are sampled i.i.d.\ from their distributions. At the end, the agent must commit to a single arm. In the standard bandit setting, the agent maximizes its cumulative reward \citep{LR:85,Auer:02,lattimore19bandit,RongMattia:21}. In fixed-budget BAI \citep{Audibert:10,Karnin:13,JT:15,Li:18}, the agent maximizes the probability of choosing the best arm within a fixed budget. In fixed-confidence BAI, the agent minimizes the budget to attain a target confidence level for identifying the best arm \citep{Even:06,Audibert:10,Karnin:13}.
Here we focus on fixed-budget BAI. 

Adaptive allocations based on upper confidence bounds (UCBs) are known to work well in fixed-budget BAI. For example, \ucbe \citep{Audibert:10} is optimal, with failure probability decreasing exponentially up to logarithmic factors. 
However, it relies on a plug-in approach of an unknown problem complexity term, learning to the adaptive variant performing significantly worse \citep{Karnin:13}.
As a result, phase-based algorithms with uniform exploration in each phase, such as \emph{successive rejects (SR)} \citep{Audibert:10} and \emph{sequential halving (SH)} \citep{Karnin:13}, have been shown to work better in practice.
Furthermore, it should be noted that irrespective of the choice of the algorithms, i.e., UCB-based algorithms or phase-based algorithms, the optimal regret that decays exponentially are conditioned on the gaps between the maximal arm and the other arms not being small. If the gaps are small, the regret may decay polynomially instead of exponentially, as we will demonstrate in the next section. 

It is well known that side information, such as the prior distribution of arm means, can improve the statistical efficiency of the cumulative regret minimization problem \citep{T:33,CL:11,RB:21}.  
Motivated by this, we propose a novel, theoretically and empirically efficient, and instance-independent UCB exploration algorithm for identifying the best arm by learning the prior information of arm means. 
We consider a Bayesian prior setting on arm means,  
where arm means are sampled i.i.d.\ from a Gaussian distribution, with mean $\mu_0$ and variance $\sigma_0^2$. 
The mean $\mu_0$ is shared among the arms. 
The variance $\sigma_0^2$ characterizes the spread of the arms. A lower $\sigma_0^2$ means that the optimal arm is harder to identify, since the gaps between the optimal and suboptimal arms are smaller on average. 
Our study shows that learning the prior of arm means also improves the performance of the UCB-based BAI algorithm and makes it more practical, as it has done in the cumulative regret minimization problem.

Further, we adopt \emph{random effect bandits} \citep{RB:21} to learn the prior information. 
From the random effect bandits, we obtain the posterior distribution of the arm means, then apply the UCB-based strategy for Bayesian BAI. 
The algorithm works as follows. In round $t\in[n]$, it pulls the arm with the highest UCB, observes its reward, and then updates the estimated arm means and their high-probability confidence intervals. We call it \emph{Random effect UCB Exploration (RUE)}. 

We make several contributions. 
First, we show that instance-dependence can compromise the optimality of the \ucbe algorithm, which can be considered as an artifact in many BAI problems. 
Second, we present an alternative formulation of the BAI problem that incorporates the prior distribution of arm means. 
Third, we bound the gap between the maximal arm and the others in probability.
This result provides a principled basis for Bayesian BAI. 
Fourth, we propose the efficient, practical, and instance-independent UCB exploration for the BAI problem, the \rue algorithm. 
Learning the prior information, \rue yields superior best-arm identification performance compared to state-of-the-art methods in empirical studies. 
Fifth, we analyze the failure probability and simple Bayes regret of \rue, and derive their upper bounds of $\tilde{O}(\sqrt{K/n})$, up to logarithmic factors. Here $n$ represents the budget and $K$ denotes the number of arms. 
Our analysis features a sharp bound on the prior gap through order statistics, and a careful comparison of the prior gap and confidence interval for bounding the error probability. 
Finally, we evaluate \rue empirically on a range of problems and observe that it outperforms sequential halving and successive rejects in broad domains, even works better than or similarly to the infeasible \ucbe in various domains. All proofs are in the appendix.

\section{Exponentially Decaying Bounds in Fixed-budget BAI: An Artifact}
\label{sec:onucbe}

Consider a fixed-budget BAI problem having $K$ arms with mean $\mu_k$, $k\in[K]$, and a horizon of $n$ rounds (or budgets). In round $t \in [n]$, the agent pulls arm $I_t\in [K]$  and observes its reward, drawn independently of the past. At the end of round $n$, the agent selects an arm $J_n$. The BAI problem concerns whether the final recommendation $J_n$ is the optimal one or not. For sake of simplicity, we will assume 
that there is a unique optimal arm. Let $i^* = \arg\max_{k \in [K]} \mu_k$ be the optimal arm and $\mu_* = \mu_{i^*}$.

Some BAI fixed-budget algorithms, such as \ucbe and \sh,  are considered (nearly) optimal since they can achieve an exponentially decaying failure probability that depends on the instance. However the property of exponentially decaying failure probability is conditioned. 
To illustrate this point, we will use \ucbe as an example. 
\citet{Audibert:10} defines the problem complexity of the BAI problem 
$$H=\sum\limits_{k\in[K]}\Delta_k^{-2},$$ 
where $\Delta_k=\mu_*-\mu_k$ for $k\neq i^*$ and $\Delta_{i^*}=\min_{k\neq i^*}\mu_*-\mu_k$(denoted as $\Delta_{\text{min}}$),  
and shows that when the exploration degree is taken appropriately, the probability of error of \ucbe for a $K$-armed bandit with rewards in $[0,1]$ satisfies 
$$e_n\leq 2nK\exp \left[-\frac{n-K}{18H}\right].$$ 
However, the optimality of \ucbe depends on $H$, which relies on $\Delta_k$, 
particularly the minimum gap $\Delta_{\text{min}}$.  
Here, we emphasize the significant impact of the minimum gap $\Delta_{\text{min}}$. 
In situations where the minimum gap is small (i.e.,$\Delta_{\text{min}}\leq (54n^{-1}\log n)^{1/2}$, implying $H\geq 2\Delta_{\text{min}}^{-2}=n/(27\log n)$), the upper bound has 
$$2nK\exp \left[-\frac{n-K}{18H}\right]\approx 2nK\exp\left[-\frac{n}{18H}\right]\geq 2Kn^{-1/2}.$$ 
Unfortunately, \emph{the small-gap condition} $\Delta_{\text{min}}\leq (54n^{-1}\log n)^{1/2}$ may not be small in practice. For instance, in a BAI problem with $n=10000$, the small-gap regime is defined by $\Delta_{\text{min}}\leq 0.223$ which is not considered small by any means.

Even when considering the lower bound, the \emph{small-gap} issue remains prevalent. 
In \citet{Audibert:10}, it is demonstrated that for Bernoulli rewards with parameters in $[p,1-p]$, where $p\in(0,1/2)$, the probability of error for \ucbe satisfies 
$$e_n\geq \exp \left[-\frac{(5+o(1))n}{p(1-p)H_2}\right],$$ 
where $H\leq H_2\leq \log(2K)H$ (see details in \citet{Audibert:10}). 
Consider an example where $p=0.2$. 
In cases where $\Delta_{\text{min}}\leq ((32n)^{-1}\log n)^{1/2}$, 
neglecting the contribution of the $o(1)$ term, the lower bound can be expressed as  
$$\exp \left[-\frac{5n}{p(1-p)H_2}\right]\geq \exp \left(-\frac{32n}{H}\right)\geq n^{-1/2}.$$ 
However, it is worth noting that the small-gap condition $\Delta_{\text{min}}\leq ((32n)^{-1}\log n)^{1/2}$ may not be small in practical scenarios. For instance, in a fixed-budget BAI problem with $n=1000$, the small-gap regime is defined by $\Delta_{\text{min}}\leq 0.0147$ which may not be considered very small in many fixed-budget BAI problems either.

Therefore, the exponentially decaying bounds on failure probability can be regarded as an artifact in many fixed-budget BAI problems. 

At last, the presence of the \emph{small-gap} problem also affects the choice of exploration degree, as the upper bound of \ucbe necessitates the parameter to be less than $25(n-K)/(36H)$. 
However, in these scenarios, this bound is on the order $\log n$, which leads to logarithmic exploration instead of linear exploration in the fixed-budget BAI problem.

\subsection{Scenario with Full-Information}
Now, let us consider a two-arm bandit problem and examine a scenario where we have complete information: each arm $k=1,2$ is pulled $n$ times and its outcomes are observed. 
We assume that $k=1$ is the optimal arm. 
In this instance, we can assume that the mean reward of each arm $k$ follows a normal distribution: 
$\bar{\mu}_k\sim \mathcal{N}(\mu_k,n^{-1}\sigma^2)$, 
where $\sigma^2$ represents the variance of reward noise. 
The probability of error can be bounded as follows: 
\begin{align}\label{full-e}
e_n^*:= &\text{Pr}(\bar{\mu}_{1}\leq \bar{\mu}_2) 
= \text{Pr}(\delta \geq  \Delta), 
\end{align}
where $\Delta=\mu_1-\mu_2$ and $\delta=(\bar{\mu}_{2}-\mu_2)-(\bar{\mu}_{1}-\mu_1)\sim  \mathcal{N}(0,2\sigma^2/n)$.
From Section 7.1 of \citet{Feller:68}, we have a lower bound of $e_n^*$: 
\begin{align}\label{full-e-lb}
e_n^*\geq & \left[\left(\frac{n\Delta^2}{2\sigma^2}\right)^{-1/2} - \left(\frac{n\Delta^2}{2\sigma^2}\right)^{-3/2}\right]\frac{1}{\sqrt{2\pi}}\exp\left(-\frac{n\Delta^2}{4\sigma^2}\right)
=:\bar{e}_n^*.
\end{align}

Now we show that this lower error bound \eqref{full-e-lb} also encounters the small-gap issue. 
Specifically, for the small-gap condition where $\Delta\leq \left(2\alpha n^{-1}\log n\right)^{1/2}$ with $\alpha$ controlling the gap size, 
the error bound in the full-information scenario is given by 
 $$\bar{e}_n^*=(2\pi)^{-1/2}[(\sigma^{-2}\alpha \log n)^{-1/2}-(\sigma^{-2}\alpha \log n)^{-3/2}]n^{-\alpha/(2\sigma^2)}.$$ 
 Assuming that $n$ satisfies $\sigma^2\alpha \log n>1$, we have that when $\alpha\geq \sigma^2$, 
 $$\bar{e}_n^*\geq (2\pi)^{-1/2}[1/\sqrt{\log n}-1/(\log n)^{3/2}]n^{-1/2}=\tilde{O}(n^{-1/2}).$$  
Therefore, even within the realm of infeasible full information in fixed-budget BAI problems,  
the exponentially decaying bounds on failure probability are also artifacts.  
Furthermore, the error order can exceed $\tilde{O}(n^{-1/2})$ 
depending on the gap size.

This scenario shows that achieving exponentially decaying regrets in fixed-budget BAI is infeasible when the small-gap issue is present. 
Therefore, the aim of this paper is not to develop an algorithm with exponentially decaying bounds, but rather to create an algorithm that significantly addresses the small-gap issue and achieves polynomially decaying regret bounds.

\section{A Bayesian Formulation for Best-Arm Identification}
\label{sec:rebandits}

In this paper we assume that the reward, denoted as $r_k$, associated with arm $k$, is generated from an (unknown) distribution with a mean $\mu_k$.  We assume that the reward noise, represented as $r_k-\mu_k$, adheres to a $\nu^2$-sub-Gaussian for a constant $\nu>0$. 
We introduce the assumption of random arm means on the BAI problem. 
Specifically, we assume that the mean arm reward $\mu_k$ of each arm $k \in [K]$ follows the following model 
\begin{equation}
  \label{payoff-M1}
  \mu_k
  = \mu_0+\delta_k\,,
\end{equation}
where $\delta_k\sim \mathcal{N}(0,\sigma_0^2)$ and $\mathcal{N}(0,\sigma_0^2)$ is a Gaussian distribution with zero mean and variance $\sigma_0^2$. As a result, the mean reward of arm $k$, $\mu_k$, is a stochastic variable with mean $\mu_0$ and variance $\sigma_0^2$. Recently, \citet{komiyama:2023} considers a Baysian BAI setting where they assume the uniform continuity of the conditional probability density functions. Different from theirs, we make a parametric perspective on priors $\mu_k$. 
Our model setting has $\sigma_0^2$ to represent the variability of the arm means. 
With a lower variance $\sigma_0^2$, the differences among the arms are smaller. 
Therefore, it is harder to learn the optimal arm, as the variability of the arm means is smaller. 
The priors $\mu_k$, $k\in[K]$, are taken into account through $(\mu_0,\sigma_0^2)$.

Different from the algorithms that rely on $H$, in the Bayesian BAI setting $\mu_*-\mu_k$ for $k\neq i^*$, can be arbitrarily small. Fortunately we can control the probability that the gap $\mu_*-\mu_k$ is less than $\alpha$ for any $\alpha>0$. This is our key in the Bayesian BAI problem. 
Define 
$$e_*(\alpha)=\prob{ \mu_{*}- \sup\limits_{k\neq i^*}\mu_k\leq \alpha},$$ for any $\alpha>0$. 
The probability $e_*$ represents the likelihood that the optimal arm $i^*$ is at least $\alpha$ better than the other arms. 
In other words, it reflects the probability of obtaining a gap between $i^*$ and the other arms that is less than $\alpha$ based on the prior distribution. In the BAI problem, $\alpha$ decreases as the numbers of pulls increases, allowing for control over the probability.
This highlights the inherent difficulty of the Bayesian BAI problem when dealing with the prior distribution of $(\mu_k)_{k\in[K]}$.

\begin{theorem}\label{lemma-true}
Assume $\mu_k$, for $k\in[K]$, are independently and identically distributed from $\mathcal{N}(\mu_0,\sigma_0^2)$. 
Then for $\alpha>0$, 
\begin{equation*}
e_*(\alpha)\leq c_K\alpha/\sigma_0. 
\end{equation*}
where $c_K=4\sqrt{2}\ln K\sqrt{\ln \left(\frac{K}{4\sqrt{2\pi}\ln K}\right)} + \frac{2}{\sqrt{2\pi}}$.
\end{theorem}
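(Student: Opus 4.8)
The plan is to reduce $e_*(\delta)$ to the spacing between the two largest order statistics of $K$ i.i.d.\ standard normals, and then control that spacing by exploiting the concentration of the top order statistic near $\sqrt{2\ln K}$. Writing $\mu_k=\mu_0+\sigma_0 X_k$ with $X_k\sim\mathcal N(0,1)$ i.i.d., let $X_{(1)}\ge X_{(2)}\ge\cdots\ge X_{(K)}$ be the decreasing reordering and let $f,F$ denote the standard normal density and c.d.f. Any arm $k\neq i^*$ has $\mu_k\le\mu_0+\sigma_0 X_{(2)}$, so $\mu_*-\mu_k\ge\sigma_0(X_{(1)}-X_{(2)})$; hence $\{\mu_*-\mu_k\le\delta\}$ forces $X_{(1)}-X_{(2)}\le\delta/\sigma_0$, and taking the supremum over suboptimal $k$ gives $e_*(\delta)\le\Pr(X_{(1)}-X_{(2)}\le\delta/\sigma_0)$. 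It then suffices to prove $\Pr(X_{(1)}-X_{(2)}\le\alpha)\le c_K\alpha$ with $\alpha=\delta/\sigma_0$.

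I would start from the exact order-statistic identity
\[
\Pr(X_{(1)}-X_{(2)}\le\alpha)=K(K-1)\int_{-\infty}^{\infty}f(y)\,F(y)^{K-2}\,[F(y+\alpha)-F(y)]\,dy,
\]
which follows from the joint density $K(K-1)f(x)f(y)F(y)^{K-2}$ of $(X_{(1)},X_{(2)})$ on $\{x\ge y\}$ after integrating the larger coordinate over $[y,y+\alpha]$. The increment $F(y+\alpha)-F(y)=\int_y^{y+\alpha}f$ is what must be bounded, and the whole difficulty is to do so without the prefactor $K(K-1)$ fighting the total mass $\int f\,F^{K-2}=\tfrac1{K-1}$ and leaving a spurious factor $K$. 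I would then split the integral at $y=0$. For $y<0$ I bound the increment crudely by $\alpha\sup f=\alpha/\sqrt{2\pi}$; since $\int_{-\infty}^{0}f\,F^{K-2}=F(0)^{K-1}/(K-1)=2^{-(K-1)}/(K-1)$, this region contributes at most $\frac{K}{2^{K-1}}\frac{\alpha}{\sqrt{2\pi}}\le\frac{2}{\sqrt{2\pi}}\alpha$, which is precisely the additive constant in $c_K$: the exponential improbability that the top two values fall below the mean cancels the factor $K$. For $y\ge0$ the density is decreasing, so $F(y+\alpha)-F(y)\le\alpha f(y)$, and the task reduces to bounding $K(K-1)\alpha\int_0^{\infty}f(y)^2F(y)^{K-2}\,dy$.

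This last integral is the main obstacle, because the naive bound $f\le 1/\sqrt{2\pi}$ reinstates the factor $K$. The key is that $f(y)F(y)^{K-2}$ concentrates near the threshold $t\approx\sqrt{2\ln K}$, so I would cut at the point $t$ with $f(t)\asymp(\ln K)/K$, namely $t=\sqrt{2\ln(K/(4\sqrt{2\pi}\ln K))}$, for which $Kf(t)=4\ln K$. Two routes complete the estimate. One is integration by parts, $\int_0^\infty f^2F^{K-2}\le\frac1{K-1}\int_0^\infty y\,f(y)F(y)^{K-1}\,dy$, after which splitting at $t$ (using $y\le t$ below $t$ against $\int_0^t fF^{K-1}\le 1/K$, and $\int_t^\infty y f=f(t)$ above) yields $K(K-1)\alpha\int_0^\infty f^2F^{K-2}\le(t+Kf(t))\alpha$. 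The other is to drop $F^{K-2}\le1$ on $[t,\infty)$ and invoke the Gaussian tail $\int_t^\infty f^2\le e^{-t^2}/(4\pi t)$; substituting $e^{-t^2}=2\pi f(t)^2$ and $Kf(t)=4\ln K$ turns the tail into a term of order $(\ln K)^2/t\asymp(\ln K)^{3/2}$, which is exactly the shape of the main term $4\sqrt2\,\ln K\,\sqrt{\ln(K/(4\sqrt{2\pi}\ln K))}$.

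Either way, collecting the $y<0$ and $y\ge0$ regions and inserting this $t$ gives $\Pr(X_{(1)}-X_{(2)}\le\alpha)\le c_K\alpha$. The hardest quantitative point is the choice of $t$: it must be taken large enough that $F^{K-2}$ (equivalently the probability that the runner-up lies below $t$) has decayed, yet small enough that $Kf(t)$ stays logarithmic, and it is precisely this balance that fixes the $\ln K$ and $\sqrt{\ln(K/\ln K)}$ factors appearing in $c_K$.
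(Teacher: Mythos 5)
Your proposal is correct in substance but executes the key estimate differently from the paper. The paper (Lemma~\ref{lem:bound-D2}) starts from the same joint density of $(X_{(1)},X_{(2)})$, but then integrates out to the exact expression $\eta(\alpha)=1-K\int\Phi(x)^{K-1}f(x+\alpha)\,dx$, differentiates in $\alpha$, splits the resulting integral at a threshold $a_K$ defined \emph{implicitly} by $a_Ke^{a_K^2/2}=cK/\ln K$, bounds the tail piece via $f'<0$ and the bulk piece via $\Phi(a_K)^{K-1}$, and finishes with the mean value theorem; the additive $2/\sqrt{2\pi}$ in $c_K$ comes from the bulk region $x\le a_K$ after tuning $c=1/(4\sqrt{2\pi})$. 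You instead bound the increment $F(y+\alpha)-F(y)$ directly ($\le\alpha/\sqrt{2\pi}$ for $y<0$, $\le\alpha f(y)$ for $y\ge0$ by monotonicity of $f$), reduce to $K(K-1)\int_0^\infty f^2F^{K-2}$, and control that by integration by parts with an \emph{explicit} cut $t$ chosen so $Kf(t)=4\ln K$; your $2/\sqrt{2\pi}$ comes from the $y<0$ region, where $KF(0)^{K-1}=K2^{-(K-1)}\le2$ kills the prefactor. Your first route is complete and yields $\alpha(t+4\ln K+2/\sqrt{2\pi})$, which is indeed $\le c_K\alpha$ for $K$ large enough (the same caveat the paper needs, since $c_K$ is only real for $K$ moderately large); what it buys is avoiding the differentiation/mean-value step and making the threshold and its role in the constant transparent. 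One caveat: your second route (dropping $F^{K-2}\le1$ on $[t,\infty)$ and using the Gaussian tail of $f^2$) is incomplete as stated, because it does not handle $\int_0^t f^2F^{K-2}$, where the naive bound $f\le1/\sqrt{2\pi}$ reinstates the factor $K$; only the integration-by-parts route, which converts the second factor of $f$ into the weight $yf$ and then uses $\int_0^t fF^{K-1}\le1/K$, actually closes that region. Since the first route carries the proof, this does not affect correctness.
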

This theorem provides a principled basis for Bayesian BAI in the following aspect. 
For any $\alpha>0$, the probability of bounding the gap by $\alpha$ is inversely proportional to the s.d. of arm means, $\sigma_0$, and is logarithmic of the number of arms, $K$.  It means that 
the effect of increasing $K$ on $e_*$ is negligible up to logarithmic factor.

We end this section by comparing $\sigma_0^2$ with $\Delta_{\text{min}}$. 
In our Bayesian BAI setting, $\sigma_0^2=\mathbb{E}[(\mu_k-\mu_0)^2]$, which represents the expected value of the squared deviation of $\mu_k$ from $\mu_0$. 
This measure does not rely on the minimum gap $\Delta_{\text{min}}$. 
In other words, $\sigma_0^2$ can be $O(1)$ even if $\Delta_{\text{min}}=o(1)$. 
Typically, $\sigma_0^2=O(1)$, which largely avoids the \emph{small-gap} problem. 
Furthermore, as shown in Section \ref{sec:analysis}, our analysis accommodates a smaller $\sigma_0^2=O(1/K)$.

\section{Algorithm}
\label{sec:BAI-algorithm}

In \cref{sec:estimation} we show the Bayesian estimation,  
and provide a heuristic motivation for why the use of confidence intervals is applicable to Bayesian BAI. At last we propose a variant of the UCB algorithm in \cref{sec:BAI}.

\subsection{Estimation}
\label{sec:estimation}

For arm $k$ and round $t$, we denote by $T_{k, t}$ the number of its pulls by round $t$, and by $r_{k,1}, \dots,  r_{k,T_{k,t}}$ the sequence of its associated rewards. 

We use Gaussian likelihood function to design our algorithm. 
More precisely, suppose that the likelihood
of reward $r_{k,T_{k,t}}$ at time $t$, given $\mu_k$, were given by the pdf of Gaussian distribution
$\mathcal{N}(\mu_k,\sigma^2)$, where we take $\sigma^2=\delta^{-1}\nu^2$ for $0<\delta\leq 1$.  
We emphasize that the Gaussian likelihood model for rewards is only used above to design the algorithm. The assumptions on the actual reward distribution are the $\nu^2$-sub-Gaussian assumption. This setup is analogous to the one described in \cite{AG-Lin:13}. 

Let the history $H_t=(I_\ell, r_{I_\ell, T_{I_\ell, \ell}})_{\ell = 1}^{t-1}$. 
In the context where the prior for $\mu_k$ is given by $\mathcal{N}(\mu_0, \sigma_0^2)$, deriving the posterior distribution utilized by our algorithm is straightforward:
\begin{align}\label{posterior-mu}
\mu_{k} | H_t & \sim \mathcal{N}(\hat{\mu}_{k, t}, \tau_{k, t}^2).
\end{align}
Here, the posterior mean $\hat{\mu}_{k,t}$ of $\mu_k$ is given by 
\begin{equation}\label{blup}
\hat{\mu}_{k,t}=(1-w_{k,t})\bar{r}_{0,t}+w_{k,t}\bar{r}_{k,t}\,,
\end{equation}
where $w_{k,t}=\sigma_0^2/(\sigma_0^2+T_{k,t}^{-1}\sigma^2)$ and 
\begin{equation*}
\bar{r}_{0,t}=\left[\sum\limits_{k=1}^K(1-w_{k,t})T_{k,t}\right]^{-1} \sum\limits_{k=1}^K(1-w_{k,t})\sum\limits_{j=1}^{T_{k,t}}r_{k,j}\,. 
\end{equation*}
The posterior variance $\tau_{k,t}^2$ is given by 
\begin{align}\label{MSE}
\tau_{k,t}^2
&=\frac{w_{k,t}\sigma^2}{T_{k,t}} + \frac{(1-w_{k,t})^2 \sigma^2}{\sum\limits_{k=1}^K T_{k,t} (1-w_{k,t})},.
\end{align}

The following proposition motivates the use of confidence intervals in Bayesian BAI.
\begin{proposition}\label{var-lemma}
Let $\Delta_k = \mu_{i^*} - \mu_k$. For any sub-optimal arm $k\neq i^*$, 
we have 
$$
\text{Pr}(\hat{\mu}_{k,n}\geq \hat{\mu}_{i^*,n} | \Delta_k, H_n)
\leq \exp\left[-\frac{\Delta_k^2}{8\tau_{k,n}^2}\right]
+\exp\left[-\frac{\Delta_k^2}{8\tau_{i^*,n}^2}\right]. 
$$
\end{proposition}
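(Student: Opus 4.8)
The plan is to reduce the pairwise comparison to two one-sided deviation events via a midpoint argument, apply a union bound, and then control each tail with a sub-Gaussian bound in which $\tau_{k,n}^2$ and $\tau_{i^*,n}^2$ play the role of the variance proxies of the estimation errors.

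First I would set the midpoint $m = \tfrac12(\mu_k + \mu_{i^*}) = \mu_k + \tfrac12\Delta_k = \mu_{i^*} - \tfrac12\Delta_k$. The elementary inclusion is that if both $\hat\mu_{k,n} < m$ and $\hat\mu_{i^*,n} > m$, then $\hat\mu_{k,n} < \hat\mu_{i^*,n}$; contrapositively the error event satisfies $\{\hat\mu_{k,n} \ge \hat\mu_{i^*,n}\} \subseteq \{\hat\mu_{k,n} \ge m\} \cup \{\hat\mu_{i^*,n} \le m\}$. Writing these two events in centered form, $\{\hat\mu_{k,n} - \mu_k \ge \tfrac12\Delta_k\}$ and $\{\mu_{i^*} - \hat\mu_{i^*,n} \ge \tfrac12\Delta_k\}$, a union bound reduces the claim to bounding each probability by $\exp[-\Delta_k^2/(8\tau^2)]$ with the matching $\tau$.

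The concentration input is that the estimation error $\hat\mu_{k,n} - \mu_k$ is sub-Gaussian with variance proxy $\tau_{k,n}^2$, where $\tau_{k,n}^2$ is exactly the posterior variance (equivalently, the mean-squared error of the BLUP estimator) from \eqref{MSE}. Granting this, the one-sided tail bound $\text{Pr}(\hat\mu_{k,n} - \mu_k \ge t) \le \exp[-t^2/(2\tau_{k,n}^2)]$ at $t = \tfrac12\Delta_k$ gives precisely $\exp[-\Delta_k^2/(8\tau_{k,n}^2)]$, and the symmetric bound for the optimal arm gives $\exp[-\Delta_k^2/(8\tau_{i^*,n}^2)]$; adding them yields the statement.

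The main obstacle is justifying this concentration input cleanly, which is why the result is framed as a motivation for using confidence intervals rather than as a sharp bound. The estimator $\hat\mu_{k,n}$ in \eqref{blup} is a shrinkage estimator: it is pulled toward the shared mean through $\bar r_{0,t}$ and is correlated across arms, so conditionally on the true arm means its sampling law is Gaussian but centered away from $\mu_k$ and with a variance that differs from the Bayes scale $\tau_{k,n}^2$. The quantity $\tau_{k,n}$ is the posterior standard deviation, for which $\mu_k - \hat\mu_{k,n} \mid H_n \sim \mathcal{N}(0, \tau_{k,n}^2)$ holds exactly under the working model \eqref{posterior-mu}; I would therefore state carefully which randomness the probability is taken over, read the deviation event in this posterior sense, and invoke sub-Gaussianity with variance proxy $\tau_{k,n}^2$ for the centered error, noting that the shrinkage bias must be tracked if a fully frequentist version conditioned on the true gap $\Delta_k$ is desired.
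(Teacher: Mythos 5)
Your proposal is correct and follows essentially the same route as the paper's proof: a midpoint argument reducing the comparison event to two one-sided deviations of size $\Delta_k/2$, a union bound, and the Gaussian tail bound with variances $\tau_{k,n}^2$ and $\tau_{i^*,n}^2$. Your closing remark about reading the deviation probabilities in the posterior sense of \eqref{posterior-mu} is a useful clarification that the paper leaves implicit, but it does not change the argument.
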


\cref{var-lemma} shows that the probability of failing to identify the best arm depends on the gap $\Delta_k$ and $\tau_{k,n}^2$ for $k\in[K]$.  
This result shows that the widths of the confidence intervals affect the failure probability. Motivated by this observation, we design an efficient UCB-based BAI algorithm by using the estimates from random effect bandits.

\subsection{Random-Effect UCB Exploration}
\label{sec:BAI}

We apply random effect bandits to BAI, and propose a novel UCB-based exploration algorithm, called \emph{Random effect UCB Exploration (\rue)}. In \rue, the upper confidence bound of arm $k$ in round $t$ is
\begin{align*}
  U_{k, t} = \hat{\mu}_{k, t-1} + \sqrt{2\tau_{k,t-1}^2\log n}\,.
\end{align*} 
Different from \reucb \citep{RB:21}, which minimizes the cumulative regret and uses the degree of exploration $2\log t$, \rue uses the degree of exploration $2 \log n$, since BAI requires high-probability confidence intervals only at the final round. In round $t$, the algorithm pulls the arm with the highest UCB $I_t = \arg\max_{k \in [K]} U_{k, t}$ and collects the associated reward. Any fixed tie-breaking rule can be used for multiple maximal.

\begin{algorithm}[!ht]
\caption{\rue for best-arm identification.}
\label{alg:bai}
\begin{algorithmic}[1]
\STATE Initialization: Pull each arm twice 
\FOR {$t = 2K+1, \dots,  n$}
\STATE Calculate  $U_{k,t}=\hat{\mu}_{k, t-1} + \sqrt{2\tau_{k, t-1}^2 \log n}$
\STATE Pull the arm with the highest $U_{k,t}$ for $k \in [K]$
\STATE Collect the reward associated the chosen arm
\ENDFOR
\STATE Return estimated best arm $J_n =\arg\max_{k\in[K]} {\hat{\mu}_{k,n}}$
\end{algorithmic}
\end{algorithm}

In \rue, the priors $(\mu_k)_{k\in[K]}$ are taken into account through the variances $\sigma_0^2$ and $\sigma^2$. 
Various methods for obtaining consistent estimators of $\hat{\sigma}_0^2$ and $\hat{\sigma}^2$ are available, including the method of moments, maximum likelihood, and restricted maximum likelihood. See \cite{Robinson:91} for details. 
One practical implication of this is that unlike \ucbe, our algorithm focuses on learning the prior to implement the algorithm. This feature of \rue can have surprising practical benefits.

\section{Analysis}
\label{sec:analysis}

We first bound the probability that \rue fails to identify the best arm. 
Let $e_n$ be the probability that \rue fails to identify the best arm
$$e_n=\text{Pr}(J_n\neq i^*),$$ 
which is over both the stochastic rewards and randomness in arm means $(\mu_k)_{k\in[K]}$. 
The main novelty in our analysis is 
that we control the failure probability of carefully comparing the gap of $(\mu_k)_{k\in[K]}$ and the confidence bounds. 

\begin{theorem}\label{them-R}
Consider Algorithm \ref{alg:bai} in a $K$-armed bandit with a budget $n\geq 4(K-1)$. 
Denote $\rho=\sqrt{(K(\sigma_0^2+\sigma^2)+\sigma_0^{-2}\sigma^2)/(K(\sigma_0^2+\sigma^2)+\sigma_0^2)}$ and $H_b=(K+\sigma_0^{-2}\sigma^2)\sigma^2$. 
Then the failure probability of Algorithm \ref{alg:bai} is 
\begin{align*}
e_n \leq &\gamma 
\sqrt{\frac{H_b(K-1)\log n}{nK}} + \gamma\sqrt{\frac{H_b\log n}{K(n-4(K-1))}} + 2Kn^{-mK+1}+ 2Kn^{-\frac{\sigma^2m}{\delta(2\sigma_0^2+\sigma^2)}+1}\,.
\end{align*} 
where $\gamma=2(1+4\rho)^{-1}(2+4\rho)c_K$ and $m=(1+K^{-1}\sigma_0^2/(\sigma_0^2+\sigma^2))(1+4\rho)^{-2}\sigma^{-2}\sigma_0^2$. 
\end{theorem}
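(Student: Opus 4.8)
The plan is to control the failure event $\{J_n\neq i^*\}=\{\exists k\neq i^*:\hat\mu_{k,n}\geq\hat\mu_{i^*,n}\}$ by separating the randomness of the prior arm means from the randomness of the rewards, feeding the two building blocks already in hand---Theorem \ref{lemma-true} (prior gap) and Proposition \ref{var-lemma} (tail of the posterior comparison)---into the decomposition. First I would fix a threshold $a>0$ and split on the order-statistic gap, writing $e_n\leq\prob{\mu_{(1)}-\mu_{(2)}\leq a}+\prob{J_n\neq i^*,\ \mu_{(1)}-\mu_{(2)}>a}$, where $\mu_{(1)}\geq\mu_{(2)}\geq\cdots$ are the ordered means. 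The first term is exactly $e_*(a)$, so Theorem \ref{lemma-true} gives $\prob{\mu_{(1)}-\mu_{(2)}\leq a}\leq c_K a/\sigma_0$; using the order statistic here, rather than a per-arm union bound, is what keeps the $K$-dependence inside $c_K$ logarithmic and explains the factor $c_K$ hidden in $\gamma$. On the complementary event every suboptimal gap satisfies $\Delta_k>a$, so a union bound over the $K-1$ suboptimal arms together with Proposition \ref{var-lemma} leaves me with $\sum_{k\neq i^*}\big(\exp[-a^2/(8\tau_{k,n}^2)]+\exp[-a^2/(8\tau_{i^*,n}^2)]\big)$ to bound.

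The second step is to turn the two exponentials into the two square-root terms. For this I need high-probability upper bounds $\tau_{k,n}^2\leq V_k$ and $\tau_{i^*,n}^2\leq V_*$ on the posterior widths. Granting these, the exponentials become $\exp[-a^2/(8V_k)]$ and $\exp[-a^2/(8V_*)]$, and choosing $a=\Theta(\sqrt{V\log n})$ at each of the two scales makes the corresponding exponential polynomially small in $n$ while leaving the linear prior-gap cost $c_K a/\sigma_0=\Theta(c_K\sqrt{V\log n}/\sigma_0)$ as the dominant contribution. This careful comparison of the prior gap against the confidence interval, carried out with $V_k$ and then with $V_*$, produces respectively the term with denominator $nK$ and the term with denominator $K(n-4(K-1))$; the latter reflects the stronger pull-count guarantee available for the optimal arm and, carrying no factor $K-1$, is the smaller of the two.

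The crux, and the step I expect to be the main obstacle, is establishing the width bounds $V_k,V_*$. Using the closed forms \eqref{blup}--\eqref{MSE} I would split $\tau_{k,n}^2$ into the per-arm part $w_{k,n}\sigma^2/T_{k,n}$ and the pooled part $(1-w_{k,n})^2\sigma^2/\sum_j T_{j,n}(1-w_{j,n})$, reducing everything to high-probability lower bounds on the pull counts $T_{k,n}$ and on the pooled quantity $\sum_j T_{j,n}(1-w_{j,n})$. Since \rue always pulls $\arg\max_k U_{k,t}$ and the bonus $\sqrt{2\tau_{k,t-1}^2\log n}$ decreases only when an arm is sampled, the UCB mechanism forces each arm to be explored, and on the event that all estimates concentrate I can convert this into the bounds that feed $H_b$ and $\rho$. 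The difficulty is that the posterior pools information across arms, both through the shared mean $\bar r_{0,t}$ and through the common denominator $\sum_j T_{j,n}(1-w_{j,n})$, so the counts are adaptively and jointly determined; disentangling this coupling, and separating the optimal arm's allocation (the source of the $n-4(K-1)$ factor) from that of the suboptimal arms, is where the real work lies, including the appearance of $\rho$ from comparing per-arm and pooled widths.

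Finally I would bound the probability of the bad event on which some estimate fails to concentrate. Applying the $\nu^2$-sub-Gaussian tail bound separately to the individual arm averages and to the pooled estimate $\bar r_{0,t}$, and union bounding over the $K$ arms and $n$ rounds, yields the two residuals $Kn^{-mK+1}$ and $Kn^{-\sigma^2 m/(\delta(2\sigma_0^2+\sigma^2))+1}$; the two distinct exponents $mK$ and $\sigma^2 m/(\delta(2\sigma_0^2+\sigma^2))$ record the two scales at which the per-arm and pooled estimators concentrate, and the identity $\sigma^2=\delta^{-1}\nu^2$ is what converts a width measured in $\sigma^2$ into a reward deviation measured in $\nu^2$. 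Collecting the two square-root terms with these two polynomial terms and absorbing the constants into $\gamma$, $\rho$, $H_b$, and $m$ gives the stated bound.
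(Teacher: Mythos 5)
Your skeleton --- a prior-gap term handled by Theorem \ref{lemma-true}, a reward-concentration term, and a union bound over arms and rounds for the bad event --- matches the paper's at the coarsest level, and your treatment of the two polynomial residuals $Kn^{-mK+1}$ and $Kn^{-\sigma^2m/(\delta(2\sigma_0^2+\sigma^2))+1}$ (splitting $\hat\mu_{k,t}-\mu_k$ into the pooled part through $\bar r_{0,t}$ and the per-arm part through $\bar r_{k,t}$, then applying sub-Gaussian tails) is essentially what the paper does. But the middle of your argument has two genuine gaps. First, you lean on Proposition \ref{var-lemma} as a rigorous tool. Its proof applies a Gaussian tail bound to $\hat\mu_{k,n}-\mu_k$ at scale $\tau_{k,n}$, which is valid only under the exact Gaussian likelihood/posterior model; the paper explicitly notes in the proof outline that this cannot be assumed for the actual sub-Gaussian rewards and uses the proposition only as heuristic motivation, replacing it in the proof by the good event $\mathcal{E}=\{|\mu_k-\hat\mu_{k,t}|\leq\eta c_{k,t}\ \forall k,t\}$.

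Second, and more fatally, the plan to pick $a=\Theta(\sqrt{V_k\log n})$ against a high-probability upper bound $V_k$ on $\tau_{k,n}^2$ cannot close for the suboptimal arms. An arm with a moderate gap is pulled only $O(\Delta_k^{-2}\log n)$ times --- possibly little more than the two initialization pulls --- so $\tau_{k,n}^2$ stays bounded below by a constant and $\exp[-a^2/(8\tau_{k,n}^2)]$ is not small unless $a=\Omega(\sigma_0\sqrt{\log n})$, at which point the prior-gap cost $c_K a/\sigma_0=\Omega(c_K\sqrt{\log n})$ exceeds $1$. You correctly flag the coupling of the pull counts as the crux but leave it unresolved, and the resolution is not a lower bound on every $T_{k,n}$. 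The paper's key step is Lemma \ref{TwoBoundOnC}: on $\mathcal{E}$ the UCB selection rule itself forces $(1-\eta)\sqrt{\beta_1}\tilde c_{k,n}\leq(1+\eta)\sqrt{\beta}\tilde c_{1,n}+\Delta_k$, so on the good event a misidentification can occur only when the \emph{optimal} arm's width $\tilde c_{1,n}$ exceeds a constant fraction of $\Delta_{\min}$. Since the same lemma gives $T_{1,n}\geq n-2(K-1)(1+\eta)^2\Delta_{\min}^{-2}\beta\sigma^2\log n-2(K-1)$, the entire first term reduces to two events of the form $\{\Delta_{\min}<\mathrm{const}\cdot\sqrt{\log n/(\cdot)}\}$, to which Theorem \ref{lemma-true} is applied; that is where the two square-root terms and the constant $\gamma$ actually come from. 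Without this mechanism your decomposition does not yield the claimed bound.
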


Following the Bayesian failure probability of \rue in Theorem \ref{them-R}, we bound its simple Bayes regret 
\begin{align*}
  \mathrm{sr}_n
  = \mathbb{E}[\mu_* - \mu_{J_n}]\,,
\end{align*}
where the expectation is over stochastic rewards and the randomness in $(\mu_k)_{k\in[K]}$. 
By applying \cref{them-R}, we demonstrate in the following theorem that the simple Bayes regret is $\tilde{O}(\sqrt{K / n})$.
\begin{theorem}\label{them-R-Bayes}
Consider Algorithm \ref{alg:bai} in a $K$-armed bandit with a budget $n> 4(K-1)$. 
Under the condition of Theorem \ref{them-R}, 
the simple Bayes regret of \rue is
\begin{align*}
  \mathrm{sr}_n
 \leq & \kappa\sqrt{\frac{2H_b(K-1)\log n}{nK}} 
 + \kappa\sqrt{\frac{2H_b\log n}{K(n-4(K-1))}}  + 4\sigma_0\sqrt{2\log K}\left(Kn^{-mK+1}+Kn^{-\frac{\sigma^2m}{\delta(2\sigma_0^2+\sigma^2)}+1}\right)\,.
\end{align*} 
where $\kappa=4(1+4\rho)^{-1}(2+4\rho)c_K\sqrt{\log K}$.
\end{theorem}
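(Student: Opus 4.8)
The plan is to derive the simple Bayes regret bound directly from the failure probability bound in \cref{them-R} by decomposing the regret according to whether the algorithm succeeds or fails in identifying the best arm. The key observation is that simple regret and failure probability are linked through the gap: when $J_n = i^*$ we incur zero regret, and when $J_n \neq i^*$ we incur the gap $\mu_* - \mu_{J_n}$, which is a nonnegative random variable. Thus I would write
\begin{align*}
\mathrm{sr}_n = \E{(\mu_* - \mu_{J_n})\I{J_n \neq i^*}},
\end{align*}
and the task reduces to controlling this expectation.

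First I would apply the Cauchy--Schwarz inequality to separate the magnitude of the gap from the probability of failure:
\begin{align*}
\E{(\mu_* - \mu_{J_n})\I{J_n \neq i^*}} \leq \sqrt{\E{(\mu_* - \mu_{J_n})^2}}\,\sqrt{\prob{J_n \neq i^*}} = \sqrt{\E{(\mu_* - \mu_{J_n})^2}}\,\sqrt{e_n}.
\end{align*}
The second factor is exactly $\sqrt{e_n}$, for which \cref{them-R} supplies a bound. For the first factor I would bound $\E{(\mu_* - \mu_{J_n})^2}$ by a quantity controlled by the prior spread $\sigma_0^2$ and the number of arms $K$. Since $\mu_* - \mu_{J_n} \leq \mu_{(1)} - \mu_{(K)} \leq 2\max_k|\mu_k - \mu_0|$ and the $\mu_k$ are i.i.d.\ $\mathcal{N}(\mu_0,\sigma_0^2)$, standard Gaussian maximal inequalities give $\E{\max_k(\mu_k-\mu_0)^2}$ of order $\sigma_0^2 \log K$, so the first factor is of order $\sigma_0\sqrt{\log K}$. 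This explains the appearance of the factor $\sqrt{\log K}$ in $\kappa$ and the factor $2\sigma_0\sqrt{2\log K}$ multiplying the polynomial terms.

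The subtlety, and the step requiring the most care, is that taking $\sqrt{e_n}$ of the bound in \cref{them-R} produces square roots of sums, and I would use the subadditivity of the square root, $\sqrt{\sum_i a_i} \leq \sum_i \sqrt{a_i}$, to split it into four terms. Applying this to the two leading terms of order $\sqrt{\log n/n}$ yields terms whose square roots are of order $(\log n / n)^{1/4}$ at face value—so to recover the claimed $\tilde{O}(\sqrt{K/n})$ rate one cannot simply Cauchy--Schwarz the whole bound uniformly. The correct approach is therefore to treat the two dominant (polynomial-in-$1/n$) terms and the two exponentially small terms $Kn^{-mK+1}$ and $Kn^{-\sigma^2 m/(\delta(2\sigma_0^2+\sigma^2))+1}$ asymmetrically: bound the regret contribution of the exponentially small failure events by multiplying the Gaussian-maximum magnitude $2\sigma_0\sqrt{2\log K}$ directly against those probabilities (which appear linearly in the statement, not under a square root), while the leading $\sqrt{H_b(K-1)\log n/(nK)}$-type terms enter with the $\kappa = 4(1+4\rho)^{-1}(2+4\rho)c_K\sqrt{\log K}$ prefactor that matches the $\gamma$ from \cref{them-R} times the extra $\sqrt{\log K}$ from the gap magnitude.

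The hard part will be verifying that the constants propagate exactly as stated—specifically that $\kappa = 2\sigma_0\sqrt{2\log K}$-style bookkeeping on the dominant terms collapses into the clean coefficient $4(1+4\rho)^{-1}(2+4\rho)c_K\sqrt{\log K}$, and that the factor-of-$2$ inside the square roots ($2H_b$ versus $H_b$) arises correctly from combining the Cauchy--Schwarz square root with the gap-magnitude estimate. I would finish by collecting the four resulting terms into the displayed form, confirming that the two leading terms carry the $\kappa$ prefactor with the $2H_b$ constant and the two exponentially decaying terms are multiplied by $2\sigma_0\sqrt{2\log K}$, giving precisely the stated bound.
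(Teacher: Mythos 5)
Your proposal has the right ingredients---the regret is the gap magnitude on the failure event, and the Gaussian maximal inequality for $K$ i.i.d.\ priors supplies the $\sigma_0\sqrt{\log K}$ factor---but the central mechanism you choose, Cauchy--Schwarz, cannot deliver the stated rate, and your attempted repair is not a valid step. As you yourself observe, $\sqrt{\E{(\mu_*-\mu_{J_n})^2}}\sqrt{e_n}$ turns the leading $\tilde O(\sqrt{K/n})$ terms of \cref{them-R} into $\tilde O((K/n)^{1/4})$. The ``asymmetric treatment'' you then propose---keeping the two dominant summands of the $e_n$ bound outside the square root while square-rooting (or not) the exponentially small ones---has no justification once you have already applied Cauchy--Schwarz to the whole indicator $\I{J_n\neq i^*}$: you cannot selectively undo the square root on some summands of an upper bound on a single probability. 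What is actually needed, and what the paper does, is a purely linear (first-moment) bound applied uniformly to all terms: since $\mu_*-\mu_{J_n}\le \mu_{(1)}-\mu_{(K)}$ on $\{J_n\neq i^*\}$,
\begin{align*}
\mathrm{sr}_n \;=\; \sum_{k\neq i^*}\text{Pr}(J_n=k)\,\condE{\mu_*-\mu_{J_n}}{J_n=k}\;\le\; e_n\,\E{\mu_{(1)}-\mu_{(K)}}\;\le\; 2\sigma_0\sqrt{2\log K}\;e_n\,,
\end{align*}
after which one simply substitutes the four-term bound on $e_n$ from \cref{them-R}; every term is multiplied by the same factor $2\sigma_0\sqrt{2\log K}$, and the first two are merely rewritten with the coefficient $\kappa$ absorbing the $\sqrt{2}$ and $\sqrt{\log K}$. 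There is no genuinely asymmetric bookkeeping to verify.

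Two smaller points. First, your bound $\E{\max_k(\mu_k-\mu_0)^2}=O(\sigma_0^2\log K)$ is the second-moment version needed for Cauchy--Schwarz; once you switch to the linear argument you only need the first-moment fact $\E{\mu_{(1)}-\mu_{(K)}}\le 2\sigma_0\sqrt{2\log K}$. Second, be aware that the factorization $\E{(\mu_{(1)}-\mu_{(K)})\I{J_n\neq i^*}}\le \E{\mu_{(1)}-\mu_{(K)}}\,\text{Pr}(J_n\neq i^*)$ used here implicitly decouples the gap from the failure event; if you want a fully rigorous version you should either justify this decoupling or note it as an additional step.
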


\subsection{Discussion}
\label{sec:analysis-discussion}

The parameter $\delta$ in \cref{them-R,them-R-Bayes} controls the last term of the bounds. When $\delta\leq\frac{\sigma^2m}{2(2\sigma_0^2+\sigma^2)}$, the last term is $\tilde{O}(K/n)$. 
As introduced in Section \ref{sec:estimation}, $\delta=\sigma^{-2}\nu^2$ 
denotes the ratio of $\nu^2$ to the variance $\sigma^2$ of the Gaussian likelihood designed in the algorithm.  
The condition on $\delta$ implies that a larger $\sigma^2$ than $\nu^2$ is required in the Gaussian likelihood. 
This requirement is analogous to that in \cite{AG-Lin:13}.
Moreover, $Kn^{-mK+1}=O(K/n)$ when $mK\geq 2$. 
Typically when $K\gg \sigma_0^2, \sigma^2$, 
we have $\rho\approx 1$, which impllies $m\approx \sigma_0^2\sigma^{-2}/25$. 
Clearly, the condition $mk\geq 2$ permits $\sigma_0^2=O(1/K)$.  
Hence, 
\cref{them-R,them-R-Bayes} state that the upper bounds on the failure probability and the simple Bayes regret are $\tilde{O}(\sqrt{H_b/n})$ respectively.

Here we characterize the hardness of the task using the quantity
$H_b=(K+\sigma_0^{-2}\sigma^2)\sigma^2$, which relies on the parameter set of bandits $(K,\sigma_0^2,\sigma^2)$. 
The quantity $H_b$ increases as the number of arms $K$ increases, the noise variance $\sigma^2$ increases, or the variability of the arms' means $\sigma^2$ decreases. 
Obviously, $H_b=O(K)$ when $\sigma_0^2=O(1)$ or $\sigma_0^2=O(1/K)$, given that $\sigma^2=O(1)$. 
In this case, the upper bounds are $\tilde{O}(\sqrt{K/n})$. 
This indicates that the bounds remain $\tilde{O}(\sqrt{K/n})$, even when $\sigma_0^2$ are small (i.e., $\sigma_0^2=O(1/K)$).

As demonstrated in Section \ref{sec:onucbe}, 
the upper bound on the failure probability of $\ucbe$ exhibits an exponential decay that is conditioned on the value of $\Delta_{\text{min}}$. 
When $\Delta_{\text{min}}$ is small, their bound degenerates to a polynomially decay, resulting in a failure probability of $\tilde{O}(Kn^{-\eta})$, where $\eta>0$ depends on how small $\Delta_{\text{min}}$ is. 
In contrast, the bounds provided in \cref{them-R,them-R-Bayes} are instance-independent, meaning that they solely depend on the budget $n$ and the bandit class defined by the number of arms and the variances, denoted as $(K,\sigma_0^2,\sigma^2)$, for which \rue is designed. These bounds are not influenced by the specific instances within the class, ensuring their independence from the particular characteristics of each instance. 
Nevertheless,  as shown in Section \ref{sec:rebandits}, 
$\sigma_0^2$ characterizes the variability of the arms' means $\mu_k$ in relation to $\mu_0$, 
thereby permitting very small $\Delta_{\text{min}}$. 
Consequently, the \emph{small-gap} issue does not pose a significant challenge in our algorithm.

\subsection{Proof Outline}
\label{sec:proof-outline}

Here we outline the proof. Comprehensive details can be found in the Appendix. 
Without loss of generality, we assume arm 1 is the optimal arm, i.e., $\mu_{i^*}=\mu_1$. 
Note that the initial round is $2K+1$, since every arm is pulled twice in the first $2K$ rounds.
Denote 
$$c_{k,t-1} =\sqrt{2\tau_{k,t-1}^2\log n}.$$
We define the events that all confidence intervals from round $2K+1$ to round $n$ hold as,
\begin{align*}
\mathcal{E} =\left\{\forall k \in [K], t\in\{2K+1,\dots,n\}: |\mu_{k} - \hat{\mu}_{k,t}| \leq \eta c_{k,t}\right\}\,,
\end{align*}
where $\eta=1/(1+4\rho)$.

The error probability is decomposed as
\begin{align}\label{regret-decomV-main}
&\prob{\hat{\mu}_{J_n,n}-\hat{\mu}_{1,n}>0}\notag\\
 = & \prob{(\hat{\mu}_{J_n,n}-\mu_{J_n})-(\hat{\mu}_{1,n}-\mu_1)>\Delta_{J_n} | \mathcal{E}}\prob{\mathcal{E}}
 +\prob{(\hat{\mu}_{J_n,n}-\mu_{J_n})-(\hat{\mu}_{1,n}-\mu_1)>\Delta_{J_n}|\bar{\mathcal{E}}}\prob{\bar{\mathcal{E}}}\notag\\
 \leq & \prob{\Delta_{J_n}< \eta(c_{1,n}+c_{J_n,n}) | \mathcal{E}}+\prob{\bar{\mathcal{E}}}\,,
\end{align} 
From \eqref{regret-decomV-main}, $e_n$ is decomposed into two terms.  
The first term is to compare the prior's gap with the upper confidence bounds. The second term is the probability that the confidence intervals do not hold.

Denote $\beta=1+K^{-1}\sigma_0^{-2}\sigma^2$ and 
\begin{align}\label{tilde_c}
\tilde{c}_{k,n}=\sqrt{\frac{2\sigma_0^2\sigma^2\log n}{T_{k,n}\sigma_0^2+\sigma^2}}, \forall k\in[K].  
\end{align}
Because $\tilde{c}_{k,n}$ can be bounded by $c_{k,n}$: $c_{k,n}\leq \sqrt{\beta}\tilde{c}_{k,n}$ as shown in \cite{RB:21}. 
Thus, we have that
\begin{align*}
\prob{\Delta_{J_n}< \eta(c_{1,n}+c_{J_n,n})}
\leq&\prob{\Delta_{J_n}< \eta\sqrt{\beta}(\tilde{c}_{1,n}+\tilde{c}_{J_n,n})}.
\end{align*}
Now we investigate to bound 
$$\prob{\Delta_{k}\leq \eta\sqrt{\beta}(\tilde{c}_{1,n}+\tilde{c}_{k,n})|\mathcal{E}}$$ for any $k\neq 1$. 
Denote $\Delta_{\text{min}}=\min\limits_{k\neq 1}\Delta_k$.
We define the following event of comparing $\tilde{c}_{1,n}$ with $\Delta_{\text{min}}$: 
\begin{equation*}
\mathcal{E}_1:=\{\tilde{c}_{1,n}\leq \Delta_{\text{min}}/(\sqrt{\beta}(1+\eta))\}. 
\end{equation*}
We can show that 
\begin{align*}
&\prob{\Delta_k< \eta\sqrt{\beta}(\tilde{c}_{1,n}+\tilde{c}_{k,n})|\mathcal{E}}\notag\\
\leq & \prob{\bar{\mathcal{E}}_1|\mathcal{E}}= \prob{\Delta_{\text{min}}< (1+\eta) 
\sqrt{\frac{\beta\sigma_0^2\sigma^2\log n}{T_{1,n}\sigma_0^2+\sigma^2}}|\mathcal{E}}.
\end{align*}

Then we investigate $T_{1,n}$. We show that 
\begin{align}\label{eqn:T1n-main}
    T_{1,n}
    & \geq n-2(K-1)(1+\eta)^2\Delta_{\text{min}}^{-2}\beta\sigma^2\log n-2(K-1).
\end{align}
For decoupling $T_{1,n}$ and $\Delta_{\text{min}}$, 
we define the following event of controlling the minimum gap:
\begin{align*}
\mathcal{E}_2:=\left\{\Delta_{\text{min}}\geq 2(1+\eta)\sqrt{(K-1)\beta\sigma^2n^{-1}\log n}\right\}. 
\end{align*}
On the event $\mathcal{E}_2$, \eqref{eqn:T1n-main} follows that $$\sigma_0^2T_{1,n}+\sigma^2\geq \sigma_0^2n/2-2\sigma_0^2(K-1)+\sigma^2.$$ 
Thus, we have that 
\begin{align*}
&\prob{\Delta_{\text{min}}< (1+\eta) 
\sqrt{\frac{2\beta\sigma_0^2\sigma^2\log n}{T_{1,n}\sigma_0^2+\sigma^2}}|\mathcal{E}}\notag\\
\leq &\prob{\Delta_{\text{min}}< (1+\eta) 
\sqrt{\frac{2\beta\sigma_0^2\sigma^2\log n}{\sigma_0^2n/2-2\sigma_0^2(K-1)+\sigma^2}}|\mathcal{E}} +\prob{\bar{\mathcal{E}}_2|\mathcal{E}}\notag\\
\leq & 
\gamma \left(
\sqrt{\frac{\beta\sigma^2\log n}{\sigma_0^2n-4\sigma_0^2(K-1)+2\sigma^2}} + 
\sqrt{\frac{\beta\sigma^2\log n}{\sigma_0^2n/(K-1)}}\right),
\end{align*} 
where the last step is from applying Theorem \ref{lemma-true}. 
Therefore, the first term of the decomposition in \eqref{regret-decomV-main} is bounded.

At last, we investigate the second term of the decomposition in \eqref{regret-decomV-main}. 
Given our analysis of the algorithm, it is imperative to note that these models may be entirely unrelated to the actual reward distribution. Consequently, we cannot make the assumption that the posterior distribution of $\mu_k$—given the historical data—is Gaussian, with the posterior mean $\hat{\mu}_{k,t}$. Instead, we opt for a decomposition approach, as detailed below.
Denote $\beta_1=1+K^{-1}\sigma_0^2/(\sigma_0^2+\sigma^2)$. 
Define the following event:
\begin{align*}
\mathcal{E}_{3k}:=\left\{|\bar{r}_{k,t}-\mu_k|\leq \eta\sqrt{\beta_1}\tilde{c}_{k,t}/2\right\}. 
\end{align*}
We have that 
\begin{align}\label{decomk-v2-main}
\prob{\bar{\mathcal{E}}}
\leq &\sum\limits_{k=1}^K\sum\limits_{t=1}^n\prob{\frac{\sigma^2|\bar{r}_{0,t}-\mu_k|}{T_{k,t}\sigma_0^2+\sigma^2}>\frac{\eta\tilde{c}_{k,t}}{2}} + \sum\limits_{k=1}^K\sum\limits_{t=1}^n\prob{\bar{\mathcal{E}}_{3k}}.
\end{align}
We aim to bound the two terms in \eqref{decomk-v2-main}. 
Utilizing the properties of sub-Gaussian distributions, 
it follows that $\bar{r}_{0,t}-\mu_0$ is sub-Gaussian.
By applying the sub-Gaussian tail inequality, we obtain that 
\begin{align*}
\prob{\frac{\sigma^2|\bar{r}_{0,t}-\mu_k|}{T_{k,t}\sigma_0^2+\sigma^2}>\frac{\eta\tilde{c}_{k,t}}{2}}
\leq & \exp\left(-\frac{\beta_1\sigma_0^2K\log n}{(1+4\rho)^2\sigma^2}\right). 
\end{align*}
On the other hand, exploiting the sub-Gaussian property of $\bar{r}_{k,t}-\mu_k$, we have 
\begin{align*}
\prob{\bar{\mathcal{E}}_{3k}}
&\leq \exp\left(-\frac{\beta_1\sigma_0^2\log n}{\delta(1+4\rho)^2(2\sigma_0^2+\sigma^2)}\right). 
\end{align*} 
Therefore, the theorem is proved.

\section{Experiments}
\label{sec:experiments}

We conduct two kinds of experiments.  In \cref{sec:experiments-random}, $\mu_k$ are random, where various settings are considered. 
In \cref{sec:experiments-fixed}, $\mu_k$ are fixed. Note that our modeling assumptions are violated here. We show these experiments of fixed $\mu_k$ because they are benchmarks established by \cite{Audibert:10} and \cite{Karnin:13}.

Our baselines include the state-of-the-art Successive Rejects (\sr) \citep{Audibert:10}, Sequential Halving (\sh) \citep{Karnin:13}, 
the UCB-exploration (\ucbe) \citep{Audibert:10}, and Top-Two Thompson sampling (\ttts) \citep{Russo:20}. As mentioned in Section \ref{sec:BAI}, in the implementation of \rue, we use the estimates of the variances $\sigma_0^2$ and $\sigma^2$. Therefore, no hyperparameters are required for \rue. 
\ucbe is implemented with parameter $a = 2n/H$, 
since this parameter works the best overall according to \cite{Audibert:10} and \cite{Karnin:13}. 
We do not report the adaptive variant of \ucbe because it performs much worse than \ucbe; even worse than \sh \citep{Karnin:13}.
Note \ucbe is infeasible since it requires the knowledge of a problem complexity parameter $H$, which depends on gaps. 
Additionally, we assess the two-stage algorithm proposed by \cite{komiyama:2023}, but it does not perform well in our experiments (see Figure \ref{fig:twostage} in the Appendix). Consequently, we exclude their method from our comparison.

\subsection{Random $\mu_k$}
\label{sec:experiments-random}

For random $\mu_k$, we have the following three setups: \\
(R1) Gaussian rewards with mean $\mu_k$ and variance $\sigma^2=1$, where $\mu_k\sim \mathcal{N}(0.5,0.1)$ for $k\in[K]$.\\  
(R2) The same $\mu_k$ as R1, but the rewards are Bernoulli rewards with means $\mu_k$. \\
(R3) Bernoulli rewards with $\mu_k\sim \mathcal{U}(0,0.5)$ for $k\in[K]$. \\
These setups allow us to explore how the performance of \rue compare against benchmarks under various distributions of noise and reward means.  
Although we assume a Gaussian distribution in our Bayesian BAI formulation, 
we also evaluate the performance of \rue for R3 when the assumption does not hold. 

We report the performance for $K=20$. Due to the randomness of $\mu_k$, the gaps and the difficulty of BAI varies. Therefore, we conduct our experiments on $50$ sampled $\mu_1, \dots, \mu_K$. For each set, we evaluate the performance of \rue and state-of-the-art algorithms, and then report the average performance. Like the case of fixed $\mu_k$, we also set $a=2$ through three random setups. 

The maximum budgets are set to $N=5000$ for R1 and R2, and to $N=12000$ for R3. We choose these values based on the median complexity terms in our experiments, which are $H \approx 2000$ for R1 and R2, and $H \approx 5500$ for R3. Then we study various budget settings $n \in \{N / 4, N / 2, N\}$, so that we can show how the performance varies when the budget is less than $H$ and double of $H$.

\begin{figure*}[!ht]
\centering
\begin{subfigure}[b]{0.32\columnwidth}
\includegraphics[keepaspectratio,width=1\linewidth]{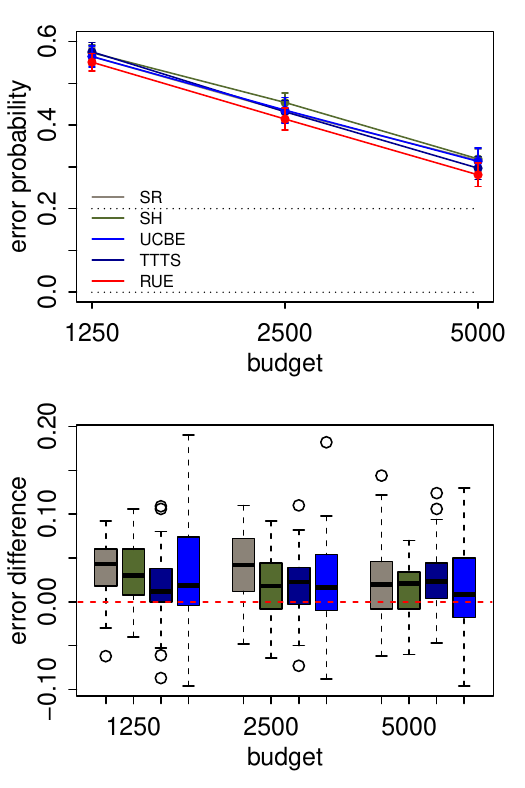}
\vspace{-0.5cm}
\subcaption{Setup R1}
\end{subfigure}
\begin{subfigure}[b]{0.32\columnwidth}
\includegraphics[keepaspectratio,width=1\linewidth]{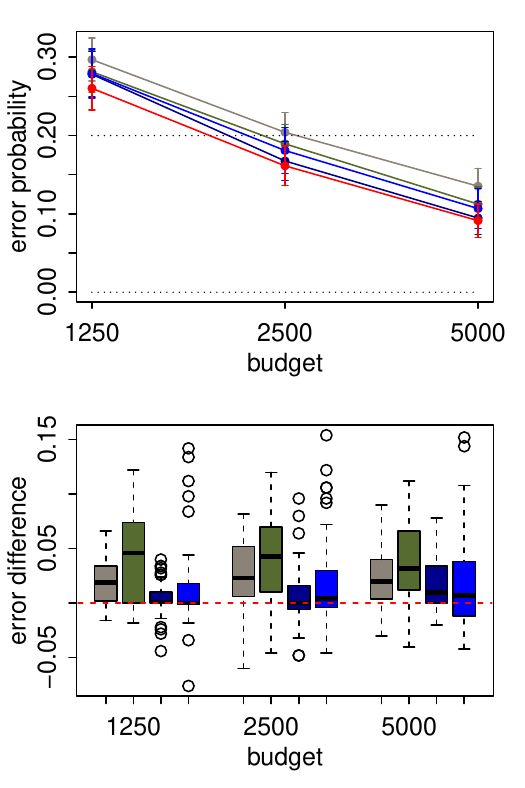}
\vspace{-0.5cm}
\subcaption{Setup R2}
\end{subfigure}
\begin{subfigure}[b]{0.32\columnwidth}
\includegraphics[keepaspectratio,width=1\linewidth]{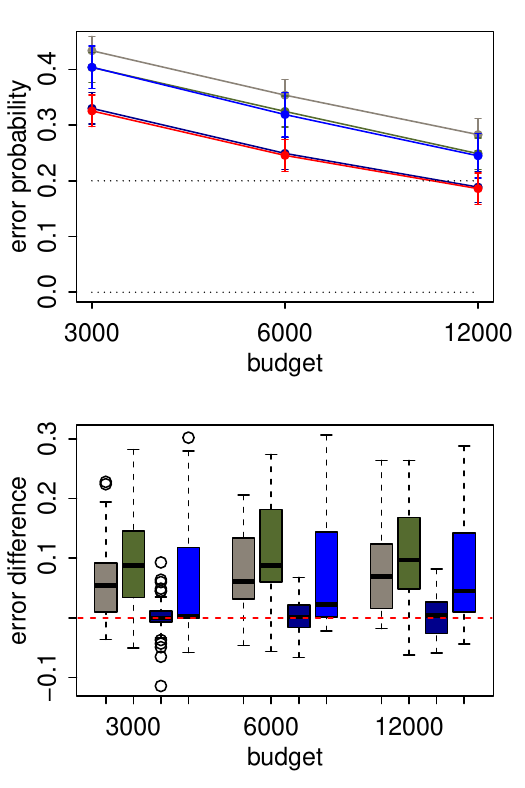}
\vspace{-0.5cm}
\subcaption{Setup R3}
\end{subfigure}
\vspace{-0.2cm}
\caption{Random $\mu_k$. 
Upper panel: the average performance among 50 experiments.  The bars denote the standard error of the mean among 50 experiments.
The lower panel: the error difference of the performance of the baselines, \sr, \sh, and \ucbe,  respectively, with respect to that of \rue among all 50 experiments. }
\label{fig:RandomSet}
\end{figure*}

In \cref{fig:RandomSet}, we report the average performance over $50$ experiments, and also boxplots of the relative performance of the baselines \sr, \sh, \ucbe, and \ttts with respect to \rue. We observe that, except for performing better or worse than \ttts in R3, \rue dominates other methods in all three setups, and even works better than \ucbe.  For example, for the case of $n=N/2$ in R3, the error probabilities of \rue are smaller 23\%, 24\%, and 31\%, respectively, than \ucbe, \sh, and \sr. 
These results shows the flexibility of \rue across various distributions of reward noise and across various distributions of reward means.

\subsection{Fixed $\mu_k$}
\label{sec:experiments-fixed}

Like \cite{Karnin:13}, we study six different experimental setups to comprehensively assess the \rue's performance:\\
(F1) One group of suboptimal arms: $\mu_k=0.45$ for $k\geq 2$.\\
(F2) Two groups of suboptimal arms: $\mu_k=0.45$ for $k=2,\dots, 8$ and $\mu_k=0.3$ otherwise.\\
(F3) Three groups of suboptimal arms: $\mu_k=0.48$ for $k=2,\dots, 5$, $\mu_k=0.4$ for $k=6,\dots,13$ and $\mu_k=0.3$ otherwise.\\
(F4) Arithmetic: The suboptimality of the arms form an arithmetic series where $\mu_2=0.5-1/(5K)$ and  $\mu_K=0.25$.\\
(F5) Geometic: The suboptimality of the arms form an geometric series where $\mu_2=0.5-1/(5K)$ and  $\mu_K=0.25$.\\
(F6) One real competitor:  $\mu_2=0.5-1/(10K)$ and $\mu_k=0.45$ for $k=3,\dots,K$.\\
In all setups, the reward distributions are Bernoulli and the mean reward of the best arm is $0.5$. The number of arms is $K = 20$. We also examine $K \in \{40, 80\}$ in \cref{fig:MoreArms} of Appendix, to show how \rue scales with $K$.

We set $2 \lceil H\rceil$ as the maximal budget for matching the hardness and for the limit of resources. Then we study various budget settings $n \in \{\lceil H/2\rceil, \lceil H\rceil, 2\lceil H\rceil\}$, so that we can show the performance when the budget is less or more than $H$. In \rue, we plug in the estimators of the variances $\sigma^2$ and $\sigma_0^2$ as in \cite{RB:21}.

\cref{fig:SixSets} shows results for our six problems. We have the following observations. First, \rue consistently outperforms \sh and \sr in all problems (except for the $n=\lceil H/2\rceil, \lceil H\rceil$ of F1, where it's a little worse than \sr). Take F2 as an example. For the budget $\lceil H/2\rceil$, $\lceil H\rceil$, and $2\lceil H\rceil\}$, the error probabilities of \rue are smaller 10\%, 20\%, and 66\%, respectively, than \sh. 
Second, comparing to the infeasible \ucbe, \rue outperforms it in F3,F5-F6, performs similarly to it in F4, and performs worse than it in F1 and F2. 
Third, comparing to \ttts, \rue outperforms it in most cases but performs slightly worse in $n=2\lceil H\rceil\}$ for F4 and F6.
Fourth, comparing with various $K \in \{40, 80\}$ in \cref{fig:MoreArms} of Appendix, we observe that the outperformance of \rue over others grows as $K$ increases. 
In summary, the observations suggest that \rue is expected to work well in various domains.

\begin{figure*}[!ht]
\centering
\includegraphics[keepaspectratio,width=\linewidth]{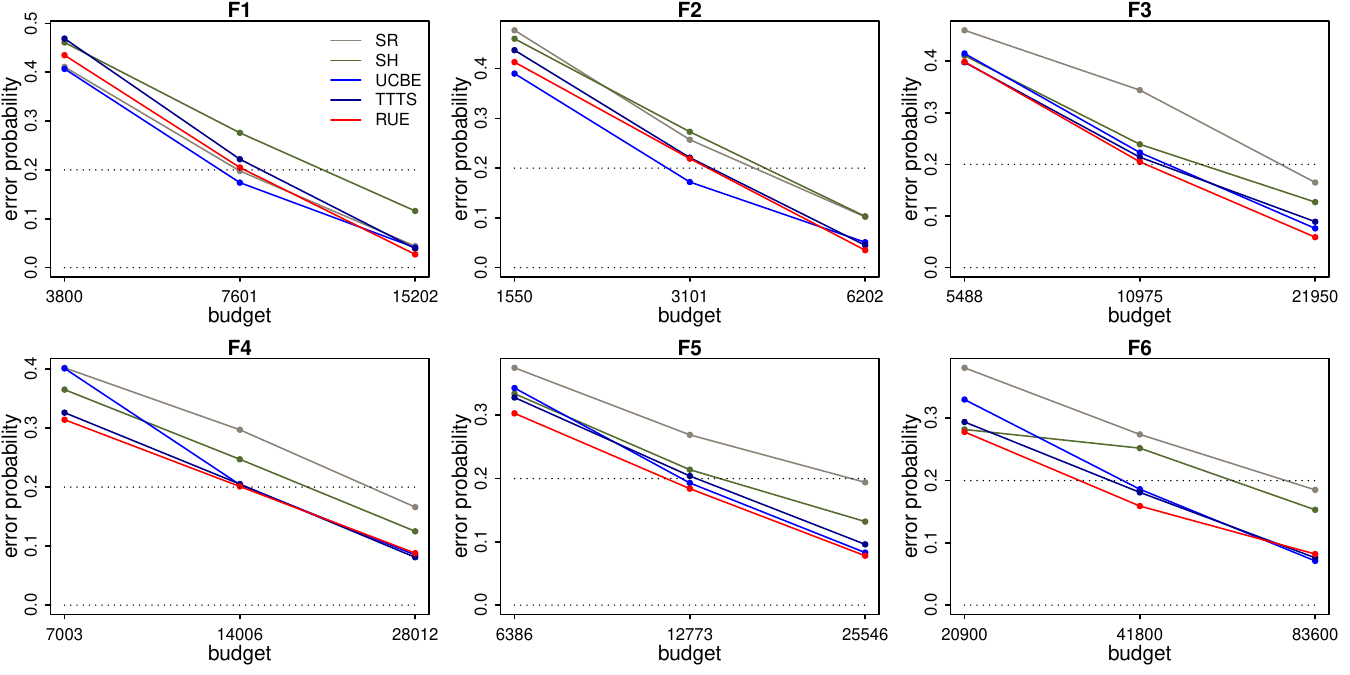}
\vspace{-0.2cm}
\caption{Fixed $\mu_k$. 
All standard errors are less than $0.016$ and not reported. Each experiment is with budgets $H / 2$, $H$, and $2 H$ (labels of the x axis). The dotted lines denote error probabilities $0$ and $0.2$, just for visual clarity. 
}
\label{fig:SixSets}
\end{figure*}

\section{Related Work}
\label{sec:related}

\cite{Bubeck:09} showed that algorithms with at most logarithmic cumulative regret, such as UCB1 \citep{Auer:02}, are not suitable for BAI; and proposed to explore more aggressively using $O(\sqrt{n})$ confidence intervals. Motivated by it, \cite{Audibert:10} considered the fixed budget setting, where \ucbe and successive rejects are proposed for BAI. \ucbe and its adaptive version have $O(\sqrt{n})$ confidence intervals. In comparison, our work shows that a UCB-based algorithm with $O(\sqrt{\log n})$ confidence intervals performs well in BAI. On the other hand, the infeasible \ucbe algorithm depends on the unknown gap, while the estimated gap makes the adaptive \ucbe less efficient. Our algorithm does not rely on the actual or estimated gaps. 
\cite{Karnin:13} proposed sequential halving, which is popular in hyperparameter optimization \citep{JT:15,Li:18}.  Different from the method, this work focuses on efficient exploration based on upper confidence bounds.

Fixed-confidence setting was introduced by \cite{Even:06}, who proposed successive elimination for BAI. \cite{MT:04} derived tight distribution-dependent lower bounds for several variants of successive elimination, \cite{jamieson:14} proposed lil-UCB;  \cite{TNM:2017} extended lil-UCB to the KL-based confidence bounds \citep{Garivier:11,KK:13}, and \cite{Shang:20} adjusted TTTS \citep{Russo:20} for fixed-confidence guarantees. In comparison, we focus on the fixed-budget setting.

Although the fixed-confidence setting and the fixed-budget setting seem “dual” to each other,  they perform differently in several domains. 
Recently, \citet{Qin:22} proposed an open problem regarding whether there exists an algorithm other than uniform sampling itself that performs uniformly no worse than uniform sampling in the fixed-budget setting. \citet{Degenne:2023} and \citet{Wang:2023} demonstrated that in the fixed-budget setting, there are no such universally superior adaptive algorithms in several BAI problems. These studies show that the expected probability of error of BAI in the fixed-budget setting may be significantly different from that in the fixed-confidence setting. 
Our paper presents another observation in the fixed-budget setting: exponentially decaying bounds in fixed-budget BAI are an artifact.

\cite{RB:21,RB:22} proposed random effect bandits for cumulative regret minimization. Our work can be viewed as an extension of \cite{RB:21} to best-arm identification. 
We show that the prior information is helpful to develop an efficient, practical UCB exploration algorithm for the Bayesian BAI problem.

Recently, Bayesian BAI has received attention. 
\cite{Russo:20} proposed Bayesian algorithms, top-two variants of Thompson sampling (TTTS), that are tailored to identifying the best arm.  His analysis focused on the frequentist consistency and rate of convergence of the posterior distribution. \cite{Shang:20} followed his work and proposed a variant of TTTS for justifying its use for fixed-confidence guarantees. Different from theirs, we propose a variant of UCB exploration by using the prior information and show its efficiency by analyzing the error probability and the simple Bayes regret.
\cite{komiyama:2023} and \cite{Azizi:2023} derived a lower bound for this setting and a two-phase algorithm that matches it. However, empirically their algorithm works badly as shown in Appendix. 
Recently, \cite{Atsidakou:2022} introduced a Bayesian version of the SH algorithm and provided prior-dependent bound on the probability of error in multi-armed bandits. Furthermore, \cite{Nguyen:2024} established upper prior-dependent bounds on the expected probability of error of prior-informed BAI in Structured Bandits.

Our Bayesian BAI formulation is  also related to Bayesian optimization \citep{Snoek:12} which assumes a Gaussian process prior and updates the posterior with new observations. 
Similar to Bayesian optimization, the Bayesian BAI setting uses a Gaussian prior to learn configuration evaluations.  However, unlike Bayesian optimization, the Gaussian prior in the Bayesian BAI setting models reward means of individual arms.

\section{Conclusions}
\label{sec:conclusions}

We introduce a formulation of the Bayesian fixed-budget BAI problem by modeling the arm means, and propose \rue, an efficient, instance-independent UCB exploration for fixed-budget BAI. 
We empirically show that \rue outperforms \sh and \sr in broad domains, even works better than or similarly to the infeasible \ucbe in various domains. We derive $\tilde{O}(\sqrt{K/n})$ bounds on its Bayesian failure probability and simple Bayes regret. 
Inspired by \cite{Li:18}, which demonstrates that BAI can be applied to hyper-parameter optimization, our proposed \rue has potential for use in hyper-parameter optimization.
This application warrants serious investigation in the future.

Nevertheless, an inherent limitation of this study is the absence of a corresponding lower bound, 
as obtaining one for fixed-budget BAI is a challenging task \citep{Qin:22}. 
Another limitation is that the analysis in this paper only focuses on the Gaussian settings. Nevertheless, \rue does not assume any distributional form, since the setting in \eqref{payoff-M1} does not assume any particular distribution, but only assumes that the first- and second-order moments of $\mu_k$ are bounded. 
Moreover, our empirical results show that \rue works well in broad domains where the Gaussian assumptions are violated.
Therefore, an interesting question is to provide the bound on the failure probability under sub-Gaussian settings. 

In our analysis, we assume that $\sigma^2$ and $\sigma_0^2$ are known. However, in practice, we substitute these parameters with their estimates. Investigating the effect of this substitution in the Bayesian setting is extremely challenging since we need to integrate the error probability over the posterior of these parameters. We leave this challenging problem as a future direction for research.  Nevertheless, we expect this effect to be small since the agent's estimates of these parameters should converge to their true values as the agent gathers more data. 

\subsubsection*{Author Contributions}
Rong J.B. Zhu designed and performed the research, performed the analysis and experiments, and wrote the paper.  
Yanqi Qiu provided the proof for Lemma A.1. 

\subsubsection*{Acknowledgements}
We are very grateful to Branislav Kveton  for insightful discussions and invaluable comments on the preliminary version of this paper. 
We thank the Assigned Action Editor, Marcello Restelli, and the three reviewers for their insightful comments and suggestions that significantly improve this paper.


\bibliography{Bandit}
\bibliographystyle{tmlr}

\clearpage
\appendix
\section{Appendix}
\label{sec:appendix}



\subsection{Proof of Theorem \ref{lemma-true}}
\label{sec:lemProof}

First we provide the following lemma, which provides a basic tool for bounding the gap $\mu_*-\mu_k$. Then we show that the probability $e_*$ depends on $\sigma_0^2$ and $\ln K$ by following Lemma \ref{lem:bound-D2}. 
\begin{lemma}\label{lem:bound-D2}
Assume $X_k$ for $k  = 1, \dots, K$, are independent and identically distributed from $\mathcal{N}(0,1)$. Denote $X_{(1)}\ge X_{(2)}\ge \cdots X_{(K)}$ the non-increasing re-ordering of $X_1, \dots, X_K$. Then there exists a constant $C> 0$, such that for all integers $K\ge 2$ and for $\alpha> 0$,
\[
\text{Pr}(X_{(1)}- X_{(2)} \le \alpha) \le C (\ln K)^{3/2} \alpha. 
\]
\end{lemma}

\begin{proof}
Denote $\eta(\alpha)= \text{Pr}(X_{(1)}- X_{(2)} \le \alpha)$. Let $\Phi(x)$ and $f(x)$ be the cumulative distribution function and the density function, respectively, of $X_i$. From the joint distribution of $(X_{(1)}, X_{(2)})$ (Fact 2 in the Appendix), we have that 
\begin{align*}
\eta(\alpha) & = K ( K-1)\int_{0\le x_1- x_2\le \alpha} \Phi(x_2)^{K-2} f(x_2)f(x_1)dx_2 dx_1
\\
&   = K ( K-1)\int_{0\le z \le \alpha; x_2 \in \mathbb{R}} \Phi(x_2)^{K-2} f(x_2)f(x_2+z)dx_2 dz
\\
&   = K ( K-1)\int_{x_2 \in \mathbb{R}} \Phi(x_2)^{K-2} f(x_2)[\Phi(x_2+\alpha)- \Phi(x_2)]dx_2 
\\
&= K \int_{x_2\in \mathbb{R}} \Phi(x_2+\alpha)d\Phi(x_2)^{K-1}-(K-1)\int_{x_2\in \mathbb{R}} d\Phi(x_2)^K
\\
& = K - K \int_{x_2\in \mathbb{R}} \Phi(x_2)^{K-1} f(x_2+ \alpha) dx_2 -(K-1)
\\
 & = 1 - K \int_{x_2\in \mathbb{R}} \Phi(x_2)^{K-1} f(x_2+\alpha)dx_2.
\end{align*}
It follows that 
\begin{align*}
\eta'(\alpha)& = -K  \int_{x_2\in \mathbb{R}} \Phi(x_2)^{K-1} f'(x_2+\alpha)dx_2.
\end{align*}
Fix $c>0$ (which will be choosen later on).  For any integer $K\ge 2$, let $a_K>0$ be the unique solution to the equation 
\[
a_K e^{a_K^2/2} = \frac{c K}{\ln K}.
\]
Set 
\begin{align*}
T_1(\alpha) &= - K \int_{x_2 > a_K}  \Phi(x_2)^{K-1} f'(x_2+\alpha)dx_2; 
\\
T_2(\alpha)&=- K \int_{x_2 \le a_K}  \Phi(x_2)^{K-1} f'(x_2+\alpha)dx_2.
\end{align*}
We can now write 
\[
\eta'(\alpha) = T_1(\alpha)+ T_2(\alpha).
\]
For the term $T_1(\alpha)$, by noting that $f'(x_2+\alpha)<0$ for all $x_2 > 0$ and $\Phi(x_2)\le 1$, we have 
\begin{align*}
T_1(\alpha) & \le -K \int_{x_2> a_K} f'(x_2+\alpha)dx
\\
&  = K f(a_K + \alpha) \le K f(a_K) = \frac{K}{\sqrt{2\pi}} e^{-a_K^2/2}. 
\end{align*}
For the term $T_2(\alpha)$, we have 
\begin{align*}
| T_2(\alpha)| &\le K \sup_{x_2 \le a_K} \Phi(x_2)^{K-1} \int_{x_2\le a_K} |f'(x_2+\alpha)|dx_2
\\
& \le K \Phi(a_K)^{K-1}\int_{x_2\in \mathbb{R}} |f'(x_2)| dx_2 =  \frac{2K}{\sqrt{2\pi}} \Phi(a_K)^{K-1}.
\end{align*}
Using the classical estimate 
\[
\Phi(t)\le 1-  \frac{1}{\sqrt{2\pi}} \frac{t}{t^2+1} e^{-t^2/2},
\]
we obtain 
\begin{align*}
\eta'(\alpha)& \le \frac{K}{\sqrt{2\pi}} e^{-a_K^2/2} + \frac{2K}{\sqrt{2\pi}} \Phi(a_K)^{K-1}
\\
& \le  \frac{K}{\sqrt{2\pi}} e^{-a_K^2/2} + \frac{2K}{\sqrt{2\pi}} \left(1-  \frac{1}{\sqrt{2\pi}} \frac{a_K}{a_K^2+1} e^{-a_K^2/2}\right)^{K-1}.
\end{align*}
By our choice of $a_K$, we have  $e^{-a_K^2/2} = \frac{a_K \ln K}{cK}$, hence
\begin{align*}
\eta'(\alpha)& \le   \frac{a_K \ln K }{c \sqrt{2\pi}}  + \frac{2K}{\sqrt{2\pi}} \left(1-  \frac{1}{\sqrt{2\pi}} \frac{a_K^2}{a_K^2+1}  \frac{\ln K}{c K}\right)^{K-1}.
\end{align*}
For $K$ large enough such that $cK /\ln K \ge \sqrt{e}$, we have $ a_K^2 e^{a_K^2} \ge e $
and thus  $a_K> 1$. Then
\[
\frac{c K}{\ln K}  = a_K e^{a_K^2/2} \ge e^{a_K^2/2}.
\]
It follows that for $K$ large enough, we have 
\[
1\le a_K\le \sqrt{2 \ln\left(\frac{c K}{\ln K}\right)}.
\]
Consequently, for $K$ large enough and $\alpha > 0$,
\[
\eta'(\alpha) \le  \frac{\ln K}{c \sqrt{\pi}} \sqrt{ \ln\left(\frac{c K}{\ln K}\right)} +  \frac{2K}{\sqrt{2\pi}}  \left( 1- \frac{1}{2 \sqrt{2\pi}} \frac{\ln K}{c K}\right)^{K-1}.
\]
Using the elementary inequality: for any $x>1$, 
\[
(1 - \frac{1}{x})^x = \exp (x \ln (1- 1/x))= \exp(-x \sum\limits_{k=1}^\infty  x^{-k}/k) \le e^{-1},
\]
we obtain that, for any integer $K\ge 2$, 
\begin{align*}
\frac{2K}{\sqrt{2\pi}}  \left( 1- \frac{1}{2 \sqrt{2\pi}} \frac{\ln K}{c K}\right)^{K-1} & = \frac{2K}{\sqrt{2\pi}}  \left[\left( 1- \frac{1}{2 \sqrt{2\pi}} \frac{\ln K}{c K}\right)^{\frac{2\sqrt{2\pi} c K}{\ln K}}\right]^{\frac{(K-1)\ln K}{2\sqrt{2\pi} c K}}
\\
&\le \frac{2K}{\sqrt{2\pi}}  e^{-\frac{(K-1)\ln K}{2\sqrt{2\pi} c K}} = \frac{2}{\sqrt{2\pi}} K^{1- \frac{1}{2\sqrt{2\pi}c} + \frac{1}{2K \sqrt{2\pi} c}} 
\\
& \le \frac{2}{\sqrt{2\pi}} K^{1- \frac{1}{2\sqrt{2\pi}c } + \frac{1}{4 \sqrt{2\pi}c}} = \frac{2}{\sqrt{2\pi}} K^{1-  \frac{1}{4 \sqrt{2\pi}c}}.
\end{align*}
Now let us take 
\[
c = \frac{1}{4\sqrt{2\pi}}.
\]
We obtain that, for $K$ large enough, 
\[
\eta'(\alpha) \le \frac{\ln K}{c\sqrt{\pi}} \sqrt{\ln \left(\frac{cK}{\ln K}\right)} + \frac{2}{\sqrt{2\pi}}. 
\]
By the mean value theorem, we obtain 
\[
\eta(\alpha) \le  \left(\frac{\ln K}{c\sqrt{\pi}} \sqrt{\ln \left(\frac{cK}{\ln K}\right)} + \frac{2}{\sqrt{2\pi}}\right)\alpha. 
\]
This clearly implies the desired result. 
\end{proof}

Denote $\mu_{(1)}\ge \mu_{(2)}\ge \cdots \mu_{(K)}$ the non-increasing re-ordering of $\mu_1, \dots, \mu_K$. 
We have 
\begin{align*}
e_*(\alpha)&= \prob{\mu_{(1)}-\mu_{(2)}<\alpha}\notag\\
&=\prob{\sigma_0^{-1}(\mu_{(1)}-\mu_{(2)})<\sigma_0^{-1}\alpha}.
\end{align*}
Then the theorem is a direct result of Lemma \ref{lem:bound-D2}.

\subsection{Proof of Proposition \ref{var-lemma}}
\label{sec:proProof}
Let $v$ be the midpoint between $\mu_k$ and $\mu_{i^*}$. Then 
\begin{align*}
  \quad \text{Pr}(\hat{\mu}_{k, n} \geq \hat{\mu}_{i^*, n} | \Delta_k, H_n)
  & = \text{Pr}(\hat{\mu}_{i^*, n} > v)\text{Pr}(\hat{\mu}_{k, n} \geq \hat{\mu}_{i^*, n} | \hat{\mu}_{i^*, n} > v) \notag\\
  & \quad + \text{Pr}(\hat{\mu}_{i^*, n} \leq v)\text{Pr}(\hat{\mu}_{k, n} \geq \hat{\mu}_{i^*, n} | \hat{\mu}_{i^*, n} \leq v) \\
  & \leq \text{Pr}(\hat{\mu}_{k, n} \geq v) + \text{Pr}(\hat{\mu}_{i^*, n} \leq v) \\
  & = \text{Pr}(\hat{\mu}_{k, n} - \mu_k \geq \Delta_k / 2) +
  \text{Pr}(\hat{\mu}_{i^*, n} - \mu_{i^*} \leq - \Delta_k / 2)\\
  & \leq \exp\left[-\Delta_k^2/(8\tau_{k,n}^2)\right]
+\exp\left[-\Delta_k^2/(8\tau_{i^*,n}^2)\right],
\end{align*} 
where the last step is a direct result of the Gaussian tail bound shown in \cref{sec:property}.

\subsection{Proof of \cref{them-R}}
\label{sec:proofthem1}

Without loss of generality, we assume arm 1 is the optimal arm, i.e., $\mu_{i^*}=\mu_1$. 
Note that the initial round is $2K+1$, since every arm is pulled twice in the first $2K$ rounds.
Denote 
$$c_{k,t-1} =\sqrt{2\tau_{k,t-1}^2\log n}.$$
We define the events that all confidence intervals from round $2K+1$ to round $n$ hold as,
\begin{align*}
\mathcal{E} =\left\{\forall k \in [K], t\in\{2K+1,\dots,n\}: |\mu_{k} - \hat{\mu}_{k,t}| \leq \eta c_{k,t}\right\}\,,
\end{align*}
where $\eta=1/(1+4\rho)$.

The error probability is decomposed as
\begin{align}\label{regret-decomV}
 \prob{\hat{\mu}_{J_n,n}-\hat{\mu}_{1,n}>0}
& = \prob{(\hat{\mu}_{J_n,n}-\mu_{J_n})-(\hat{\mu}_{1,n}-\mu_1)>\Delta_{J_n} | \mathcal{E}}\prob{\mathcal{E}}\notag\\
&\quad +\prob{(\hat{\mu}_{J_n,n}-\mu_{J_n})-(\hat{\mu}_{1,n}-\mu_1)>\Delta_{J_n}|\bar{\mathcal{E}}}\prob{\bar{\mathcal{E}}}\notag\\
& \leq \prob{\Delta_{J_n}< \eta(c_{1,n}+c_{J_n,n}) | \mathcal{E}}+\prob{\bar{\mathcal{E}}}\,,
\end{align} 
From \eqref{regret-decomV}, $e_n$ is decomposed into two terms.  
The first term is to compare the prior's gap with the upper confidence bounds. The second term is the probability that the confidence intervals do not hold.  

At first, we focus on investigating the first term of the decomposition in \eqref{regret-decomV}. 
Denote $\beta=1+K^{-1}\sigma_0^{-2}\sigma^2$ and $\beta_1=1+K^{-1}\sigma_0^2/(\sigma_0^2+\sigma^2)$. 
Noting that $\tilde{c}_{k,n}$ just relies on its corresponding $T_{k,n}$, using $\tilde{c}_{k,n}$ instead of $c_{k,n}$ breaks the dependence of arm $k$ on other arms.  
Because $\tilde{c}_{k,n}$ can be bounded by $c_{k,n}$: $c_{k,n}\leq \sqrt{\beta}\tilde{c}_{k,n}$ as shown in \cite{RB:21}. 
Thus, we have that
\begin{align}\label{toTilde}
\prob{\Delta_{J_n}< \eta(c_{1,n}+c_{J_n,n})}
&\leq\prob{\Delta_{J_n}< \eta\sqrt{\beta}(\tilde{c}_{1,n}+\tilde{c}_{J_n,n})}.
\end{align}
Now we bound $\prob{\Delta_{k}\leq \eta\sqrt{\beta}(\tilde{c}_{1,n}+\tilde{c}_{k,n})|\mathcal{E}}$ for any $k\neq 1$. 
Denote $\Delta_{\text{min}}=\min\limits_{k\neq 1}\Delta_k$.
We define the following event of comparing $\tilde{c}_{1,n}$ with $\Delta_{\text{min}}$: 
\begin{equation*}
\mathcal{E}_1:=\{\tilde{c}_{1,n}\leq \Delta_{\text{min}}/(\sqrt{\beta}(1+\eta))\}. 
\end{equation*}
We have that 
\begin{align}\label{prob-dk}
&\quad \prob{\Delta_k< \eta\sqrt{\beta}(\tilde{c}_{1,n}+\tilde{c}_{k,n})|\mathcal{E}}\notag\\
&\leq \prob{\Delta_k< \eta\Delta_{\text{min}}/(1+\eta)+\eta\sqrt{\beta}\tilde{c}_{k,n}|\mathcal{E}_1,\mathcal{E}}+\prob{\bar{\mathcal{E}}_1|\mathcal{E}}\notag\\
& \leq \prob{\Delta_k< \eta(1+\eta)\sqrt{\beta}\tilde{c}_{k,n}|\mathcal{E}_1,\mathcal{E}}+\prob{\bar{\mathcal{E}}_1|\mathcal{E}}.
\end{align}
We shall show $\Delta_k\geq \eta(1+\eta)\sqrt{\beta}\tilde{c}_{k,n}$ given $\mathcal{E}$ and $\mathcal{E}_1$  when $\eta$ satisfies 
$2\eta(1+\eta)\sqrt{\beta/\beta_1}+\eta-1\leq0$.
In the following we take $\eta=1/(1+4\rho)$. 
Lemma \ref{TwoBoundOnC} shows that on the event $\mathcal{E}$ the following result holds:
\begin{align*}
\Delta_k\geq (1-\eta)\sqrt{\beta_1}\tilde{c}_{k,t}- (1+\eta)\sqrt{\beta}\tilde{c}_{1,t},
\end{align*} 
implying that, on the events $\mathcal{E}$ and $\mathcal{E}_1$, 
\begin{equation*}
\Delta_k\geq (1-\eta)\sqrt{\beta_1}\tilde{c}_{k,t}/2\geq \eta(1+\eta)\sqrt{\beta}\tilde{c}_{k,n},
\end{equation*}
where the second inequality is from $\eta$ satisfying 
$2\eta(1+\eta)\sqrt{\beta/\beta_1}+\eta-1\leq0$. 
Thus, \eqref{prob-dk} follows that 
\begin{align}\label{prob-dk-v1}
\prob{\Delta_k< \eta\sqrt{\beta}(\tilde{c}_{1,n}+\tilde{c}_{k,n})|\mathcal{E}}
& \leq \prob{\bar{\mathcal{E}}_1|\mathcal{E}}= \prob{\Delta_{\text{min}}< (1+\eta) 
\sqrt{\frac{\beta\sigma_0^2\sigma^2\log n}{T_{1,n}\sigma_0^2+\sigma^2}}|\mathcal{E}}.
\end{align}

Now we investigate $T_{1,n}$. 
Lemma \ref{TwoBoundOnC} shows that on the event $\mathcal{E}$ for $k\neq 1$,
\begin{align*}
T_{k,t}&\leq 2+2\Delta_k^{-2}(1+\eta)^2\beta\sigma^2\log n.
\end{align*}
It follows that 
\begin{align}\label{eqn:T1n}
    T_{1,n}&=n-\sum\limits_{k\neq 1}T_{k,n} \geq n-2(K-1)(1+\eta)^2\Delta_{\text{min}}^{-2}\beta\sigma^2\log n-2(K-1).
\end{align}
For decoupling $T_{1,n}$ and $\Delta_{\text{min}}$, 
we define the following event of controlling the minimum gap:
\begin{align*}
\mathcal{E}_2:=\left\{\Delta_{\text{min}}\geq 2(1+\eta)\sqrt{(K-1)\beta\sigma^2n^{-1}\log n}\right\}. 
\end{align*}
On the event $\mathcal{E}_2$, \eqref{eqn:T1n} follows that $$\sigma_0^2T_{1,n}+\sigma^2\geq \sigma_0^2n/2-2\sigma_0^2(K-1)+\sigma^2.$$ 
Thus, we have that 
\begin{align}\label{prob-dk-v2}
&\prob{\Delta_{\text{min}}< (1+\eta) 
\sqrt{\frac{2\beta\sigma_0^2\sigma^2\log n}{T_{1,n}\sigma_0^2+\sigma^2}}|\mathcal{E}}\notag\\
\leq &\prob{\Delta_{\text{min}}< (1+\eta) 
\sqrt{\frac{2\beta\sigma_0^2\sigma^2\log n}{\sigma_0^2n/2-2\sigma_0^2(K-1)+\sigma^2}}|\mathcal{E}} +\prob{\bar{\mathcal{E}}_2|\mathcal{E}}\notag\\
= &\prob{\Delta_{\text{min}}< (1+\eta) 
\sqrt{\frac{2\beta\sigma_0^2\sigma^2\log n}{\sigma_0^2n/2-2\sigma_0^2(K-1)+\sigma^2}}}\notag\\
& +\prob{\Delta_{\text{min}}< (1+\eta)\sqrt{2a(K-1)\beta\sigma^2n^{-1}\log n}}\notag\\
\leq & 
c_K(1+\eta)\left(
\sqrt{\frac{2\beta\sigma_0^2\sigma^2\log n}{\sigma_0^2n/2-2\sigma_0^2(K-1)+\sigma^2}} + 
\sqrt{\frac{4\beta\sigma^2\log n}{n/(K-1)}}\right),
\end{align} 
where the last step is from Lemma \ref{lem:bound-D2}.

At last, we investigate the second term of the decomposition in \eqref{regret-decomV}. 
Define the following event: for each arm $k$
\begin{align*}
\mathcal{E}_{3k}:=\left\{|\bar{r}_{k,t}-\mu_k|\leq \eta\sqrt{\beta_1}\tilde{c}_{k,t}/2\right\}. 
\end{align*}
From \eqref{posterior-mu}, 
we have that 
\begin{align}\label{decomk-v2}
\prob{\bar{\mathcal{E}}}
&\leq\sum\limits_{k=1}^K\sum\limits_{t=1}^n\prob{|\hat{\mu}_{k,t}-\mu_k|> \eta c_{k,t}}\notag\\
&=\sum\limits_{k=1}^K\sum\limits_{t=1}^n\prob{|\sigma^2/(T_{k,t}\sigma_0^2+\sigma^2)(\bar{r}_{0,t}-\mu_k)+w_{k,t}(\bar{r}_{k,t}-\mu_k)|> \eta c_{k,t}}\notag\\
&\leq \sum\limits_{k=1}^K\sum\limits_{t=1}^n\left[\prob{\sigma^2/(T_{k,t}\sigma_0^2+\sigma^2)|\bar{r}_{0,t}-\mu_k|>\eta\sqrt{\beta_1} \tilde{c}_{k,t}/2}+\prob{\bar{\mathcal{E}}_{3k}}\right].
\end{align}
where the last inequality is from $w_{k,t}<1$ and $ c_{k,n}\geq \sqrt{\beta_1}\tilde{c}_{k,n}$ as shown in \cite{RB:21}.  

We shall investigate $\prob{\sigma^2/(T_{k,t}\sigma_0^2+\sigma^2)|\bar{r}_{0,t}-\mu_k|>\eta\sqrt{\beta_1}\tilde{c}_{k,t}/2}$ and $\prob{\bar{\mathcal{E}}_{3k}}$ respectively. 
From the properties of sub-Gaussian, 
$\bar{r}_{0,t}-\mu_0$ is sub-Gaussian with the parameters 
$$\nu_t=:\left[\sum\limits_{k=1}^K(1-w_{k,t})T_{k,t}\right]^{-2} \sum\limits_{k=1}^K(1-w_{k,t})^2T_{k,t}^2(\sigma_0^2+\nu^2/T_{k,t})\geq K^{-1}(\sigma_0^2+\nu^2/2),$$
where the inequality is from the Cauchy–Schwarz inequality and $T_{k,t}\geq 2$. 
It follows that 
\begin{align}\label{e31}
\prob{\sigma^2/(T_{k,t}\sigma_0^2+\sigma^2)|\bar{r}_{0,t}-\mu_k|>\eta\sqrt{\beta_1}\tilde{c}_{k,t}/2}
\leq & 2\exp\left(-\frac{2\beta_1\sigma_0^2(T_{k,n}\sigma_0^2+\sigma^2)K\log n}{4(1+4\rho)^2\sigma^2(\sigma_0^2+\nu^2/2)}\right)\notag\\
\leq & 2\exp\left(-\frac{\beta_1\sigma_0^2K\log n}{(1+4\rho)^2\sigma^2}\right),
\end{align}
where the first step is from the sub-Gaussian tail inequality and the second step is from $T_{k,t}\geq 2$ and $\sigma^2\geq \nu^2$

On the other hand, we have 
\begin{align*}
\prob{\bar{\mathcal{E}}_{3k}}
&\leq 2\exp\left(-\frac{2T_{k,t}\eta^2\beta_1\sigma^2\sigma_0^2\log n}{4\nu^2(T_{k,t}\sigma_0^2+\sigma^2)}\right)\leq 2\exp\left(-\frac{\beta_1\sigma_0^2\log n}{\delta(1+4\rho)^2(2\sigma_0^2+\sigma^2)}\right),
\end{align*}
where the first step is from the sub-Gaussian tail inequality and noticing $\sigma^2=\nu^2/\delta$, and 
the last step is from $T_{k,t}\geq 2$. 
Denote $m=\beta_1(1+4\rho)^{-2}\sigma^{-2}\sigma_0^2$, we have 
\begin{align}\label{decomk-v3}
\prob{\bar{\mathcal{E}}}
&\leq 2Kn^{-mK+1}+2Kn^{-\frac{\sigma^2m}{\delta(2\sigma_0^2+\sigma^2)}+1}, 
\end{align}
Therefore, combing \eqref{prob-dk-v1}, \eqref{prob-dk-v2}, and \eqref{decomk-v2}, the theorem is proved. 


\subsection{Proof of Theorem \ref{them-R-Bayes}}
\label{sec:proofthem2}
Let $\mu_{(1)}=\max\{\mu_k, k\in[K]\}$ and $\mu_{(K)}=\min\{\mu_k, k\in[K]\}$. 
We have that
\begin{align*}
\mathrm{sr}_n
= & \sum\limits_{k\neq i^*}\text{Pr}(J_n= k)\condE{\mu_* - \mu_{J_n}}{J_n= k}\notag\\
\leq & e_n\E{\mu_{(1)} - \mu_{(K)}}\leq 2\sigma_0\sqrt{2\log K}e_n,
\end{align*}
where the first inequality is from $\mu_{J_n}\geq \mu_{(K)}$, and the last step is due to the fact of $\E{\mu_{(1)}- \mu_{(K)}}\leq 2\sigma_0\sqrt{2\log K}$. 
Combing Theorem \ref{them-R}, the proof is concluded.

\subsection{Lemmas}
\label{sec:lemma}

\begin{lemma}\label{TwoBoundOnC}
On the event $\mathcal{E}$, 
We have the following two results: 
for $k\neq 1$, 
\begin{align}
T_{k,t}&\leq 2+2\Delta_k^{-2}(1+\eta)^2\beta\sigma^2\log n\label{Tk_up1}\\
(1-\eta)\sqrt{\beta_1}\tilde{c}_{k,t} &\leq (1+\eta)\sqrt{\beta}\tilde{c}_{1,t}+\Delta_k\label{Tk_lowv2}.
\end{align}
\end{lemma}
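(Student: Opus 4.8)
The plan is to derive both inequalities from a single mechanism: isolate the last round $s\le t$ at which a fixed arm is pulled, and compare that arm's upper confidence bound with arm $1$'s at round $s$, using that the pulled arm had the largest UCB. Throughout I would work on the event $\mathcal{E}$, where $|\mu_k-\hat\mu_{k,s-1}|\le \eta c_{k,s-1}$, and use the two-sided radius sandwich $\sqrt{\beta_1}\,\tilde c_{k,\cdot}\le c_{k,\cdot}\le \sqrt{\beta}\,\tilde c_{k,\cdot}$ from \cite{RB:21} to pass between the posterior radius $c_{k,\cdot}$ and the pull-only radius $\tilde c_{k,\cdot}$ of \eqref{tilde_c}, together with the monotonicity of $\tilde c_{k,\cdot}$ in $T_{k,\cdot}$.

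For \eqref{Tk_up1}, I would take $s$ to be the last round up to $t$ at which the suboptimal arm $k$ is pulled, so that $T_{k,t}=T_{k,s-1}+1$. Selection of $k$ at round $s$ gives $\hat\mu_{k,s-1}+c_{k,s-1}\ge \hat\mu_{1,s-1}+c_{1,s-1}$. Replacing $\hat\mu_{k,s-1}$ by its upper confidence value and $\hat\mu_{1,s-1}$ by its lower one on $\mathcal{E}$, and dropping the nonnegative $(1-\eta)c_{1,s-1}$ term, yields $\Delta_k\le (1+\eta)c_{k,s-1}\le (1+\eta)\sqrt{\beta}\,\tilde c_{k,s-1}$. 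Squaring, substituting the closed form of $\tilde c_{k,s-1}^2$, and solving for $T_{k,s-1}$ gives $T_{k,s-1}\le 2\Delta_k^{-2}(1+\eta)^2\beta\sigma^2\log n$; adding back the single pull and absorbing the two-pull initialization gives the stated $+2$.

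For \eqref{Tk_lowv2}, I would instead take $s$ to be the last round up to $t$ at which the optimal arm $1$ is pulled, so the selection inequality reverses: $\hat\mu_{1,s-1}+c_{1,s-1}\ge \hat\mu_{k,s-1}+c_{k,s-1}$. Bounding $\hat\mu_{1,s-1}$ from above and $\hat\mu_{k,s-1}$ from below on $\mathcal{E}$ gives $\Delta_k\ge (1-\eta)c_{k,s-1}-(1+\eta)c_{1,s-1}$. I would then apply the radius sandwich in the favorable directions, namely $c_{k,s-1}\ge \sqrt{\beta_1}\,\tilde c_{k,s-1}$ and $c_{1,s-1}\le \sqrt{\beta}\,\tilde c_{1,s-1}$, and invoke monotonicity $\tilde c_{k,s-1}\ge \tilde c_{k,t}$ (since $T_{k,s-1}\le T_{k,t}$) to reach the claimed comparison against $\tilde c_{k,t}$ and $\tilde c_{1,t}$.

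The main obstacle is the bookkeeping in the final step of \eqref{Tk_lowv2}: the UCB at round $s$ is built from $T_{1,s-1}=T_{1,t}-1$ pulls of arm $1$, so $\tilde c_{1,s-1}$ is evaluated at one fewer pull than the target $\tilde c_{1,t}$, and the two differ by monotonicity in the unfavorable direction. I expect this single-pull discrepancy to be controlled by the monotonicity of $\tilde c_{1,\cdot}$ together with $T_{1,t}\ge 2$, so that it is absorbed into the constants; making this translation precise, rather than the UCB comparison itself, is where the care is needed. A secondary technicality is the indexing at the start of the exploration phase (the case $s=2K+1$ uses the round-$2K$ initialization estimate), which the two-pull initialization handles as a base case.
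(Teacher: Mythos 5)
Your proposal is essentially the paper's argument: the paper proves both bounds by induction on $t$, but the inductive step in the case where the arm in question is pulled is exactly your ``selection inequality at the pull round'' step --- compare $\hat\mu_{k,t}+c_{k,t}$ with $\hat\mu_{1,t}+c_{1,t}$, replace estimates by their confidence limits on $\mathcal{E}$ to get $(1+\eta)c_{k,t}\geq\Delta_k$ for \eqref{Tk_up1} and $(1-\eta)c_{k,t}\leq(1+\eta)c_{1,t}+\Delta_k$ for \eqref{Tk_lowv2}, then apply the sandwich $\sqrt{\beta_1}\,\tilde c_{k,t}\leq c_{k,t}\leq\sqrt{\beta}\,\tilde c_{k,t}$ from Lemma~\ref{sec:tau}. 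The single obstacle you flag for \eqref{Tk_lowv2} --- that pulling arm $1$ shrinks $\tilde c_{1,\cdot}$ in the unfavorable direction when passing from the pull round to round $t$ --- is real, but the paper's induction does not resolve it either (its step for $I_{t+1}=1$ establishes the inequality at time $t$ and then asserts it at $t+1$ without accounting for the decrease of $\tilde c_{1,t+1}$), so you have identified a genuine looseness in the paper rather than introduced a new gap.
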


\begin{proof}

\eqref{Tk_up1} is obviously true at time $t=2K+1$. 
We assume that it holds at time $t\geq 2K+1$. If $I_{t+1}\neq k$, then $T_{k,t+1}=T_{k,t}$, thus it still holds. 
If $I_{t+1}=k$, it means that $\hat{\mu}_{k,t}+c_{k,t}\geq \hat{\mu}_{1,t}+c_{1,t}$. 
Note that on $\mathcal{E}$, we have that 
$$\hat{\mu}_{1,t}+c_{1,t}\geq \mu_1 \text{, and } \hat{\mu}_{k,t}+c_{k,t}\leq \mu_k+(1+\eta)c_{k,t}.$$
They follows $$(1+\eta)c_{k,t}\geq \Delta_k.$$
Thus, $T_{k,t}\leq 2\Delta_k^{-2}(1+\eta)^2\beta\sigma^2\log n$ holds due to $\tau_{k,t}^2\leq \beta\sigma_0^2\sigma^2/(T_{k,t}\sigma_0^2+\sigma^2)$ shown in Lemma \ref{sec:tau}. 
By using $T_{k,t+1}=T_{k,t}+1$, we prove \eqref{Tk_up1}.

\eqref{Tk_lowv2} is obviously true at the initial time $t=2K+1$. 
We assume that it holds at time $t\geq 2K+1$. If $I_{t+1}\neq 1$, then $T_{1,t+1}=T_{1,t}$, thus it still holds.  
If $I_{t+1}= 1$, it means that 
$$\hat{\mu}_{k,t}+c_{k,t}\leq \hat{\mu}_{1,t}+c_{1,t}.$$ 
Note that on $\mathcal{E}$, we have that
\begin{equation*}
\hat{\mu}_{1,t}+c_{1,t}\leq \mu_1+(1+\eta)c_{1,t} \text{, and }
\hat{\mu}_{k,t}+c_{k,t}\geq \mu_k+(1-\eta)c_{k,t}. 
\end{equation*}
They follow
$$(1-\eta)c_{k,t} \leq (1+\eta)c_{1,t}+\Delta_k.$$ 
Since $c_{k,t}\geq \sqrt{\beta_1}\tilde{c}_{k,t}$ and $c_{1,t}\leq \sqrt{\beta}\tilde{c}_{1,t}$ shown in Lemma \ref{sec:tau}, we have that 
$$(1-\eta)\sqrt{\beta_1}\tilde{c}_{k,t} \leq (1+\eta)\sqrt{\beta}\tilde{c}_{1,t}+\Delta_k.$$ 
By using $T_{k,t+1}=T_{k,t}+1$, we prove \eqref{Tk_lowv2}.
\end{proof}

\begin{lemma}\label{sec:tau}
(Lemmas 1 \& 5 in \cite{RB:21})\begin{align*}
\frac{\sigma_0^2\sigma^2}{T_{k,t}\sigma_0^2+\sigma^2}(1+K^{-1}\sigma_0^2/(\sigma_0^{2}+\sigma^2))\leq \tau_{k,t}^2 \leq \frac{\sigma_0^2\sigma^2}{T_{k,t}\sigma_0^2+\sigma^2}(1+K^{-1}\sigma^2\sigma_0^{-2}).
\end{align*}
\end{lemma}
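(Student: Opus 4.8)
The plan is to prove both inequalities by direct substitution into the closed form \eqref{MSE} and reduce them to a single elementary bound on a dimensionless correction ratio. First I would rewrite the two summands of \eqref{MSE} against the common prefactor $M_{k}:=\sigma_0^2\sigma^2/(T_{k,t}\sigma_0^2+\sigma^2)$, which is exactly the factor multiplying $(1+K^{-1}(\cdot))$ on both sides. Using $w_{k,t}=T_{k,t}\sigma_0^2/(T_{k,t}\sigma_0^2+\sigma^2)$ and $1-w_{k,t}=\sigma^2/(T_{k,t}\sigma_0^2+\sigma^2)$, the first term $w_{k,t}\sigma^2/T_{k,t}$ collapses to $M_k$, and one has the convenient identity $M_k=\sigma_0^2(1-w_{k,t})$. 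For the second term, the denominator simplifies via $\sum_{j}T_{j,t}(1-w_{j,t})=\sigma^2\sum_j h_j$, where $h_j:=T_{j,t}/(T_{j,t}\sigma_0^2+\sigma^2)$ satisfies $\sigma_0^2 h_j=w_{j,t}$. Putting these together gives $\tau_{k,t}^2=M_k(1+R_k)$ with $R_k=(1-w_{k,t})/\sum_{j=1}^K w_{j,t}$, so the two claimed bounds become exactly $K^{-1}\sigma_0^2/(\sigma_0^2+\sigma^2)\le R_k\le K^{-1}\sigma^2/\sigma_0^2$.

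After this reduction, both directions are estimates of the weights $w_{j,t}\in(0,1)$ under $T_{j,t}\ge 1$ (here $T_{j,t}\ge 2$). For the upper ($\beta$) side I would use $1-w_{k,t}\le\sigma^2/(\sigma_0^2+\sigma^2)$ together with the uniform bound $w_{j,t}\ge\sigma_0^2/(\sigma_0^2+\sigma^2)$ for every $j$, so that $\sum_j w_{j,t}\ge K\sigma_0^2/(\sigma_0^2+\sigma^2)$; dividing yields $R_k\le\sigma^2/(K\sigma_0^2)$, which is precisely the $\beta$-factor. This direction is clean and is the part I would write out in full.

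The lower ($\beta_1$) side is the delicate step and I expect it to be the main obstacle. Crudely bounding $\sum_j w_{j,t}\le K$ reduces the claim to $1-w_{k,t}\ge\sigma_0^2/(\sigma_0^2+\sigma^2)$, but this is not uniform in $T_{k,t}$: as an arm accrues pulls, $w_{k,t}$ tends to $1$ and $R_k$ shrinks, so the $\beta_1$-factor is tight only when $T_{k,t}$ is moderate. This is consistent with the way the bound is consumed in Lemma \ref{TwoBoundOnC}, where the lower estimate $c_{k,t}\ge\sqrt{\beta_1}\tilde c_{k,t}$ is applied to sub-optimal arms, whose pull counts are already controlled through \eqref{Tk_up1}, whereas the upper estimate $c_{1,t}\le\sqrt{\beta}\tilde c_{1,t}$ is applied to the heavily pulled optimal arm. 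I would therefore follow the argument of Lemma~5 in \cite{RB:21}, tracking the correction term in the relevant parameter regime and checking that the constants collapse to $\beta_1=1+K^{-1}\sigma_0^2/(\sigma_0^2+\sigma^2)$. Once the reduction to $R_k=(1-w_{k,t})/\sum_j w_{j,t}$ is in place, the remaining manipulations are routine, and the equivalent statements $c_{k,t}\ge\sqrt{\beta_1}\tilde c_{k,t}$ and $c_{k,t}\le\sqrt{\beta}\tilde c_{k,t}$ follow immediately from $c_{k,t}=\sqrt{2\tau_{k,t}^2\log n}$ and $\tilde c_{k,t}=\sqrt{2M_k\log n}$.
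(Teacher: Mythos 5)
Your reduction is correct and is exactly the right way to attack this statement: writing $w_{k,t}=T_{k,t}\sigma_0^2/(T_{k,t}\sigma_0^2+\sigma^2)$, the first summand of \eqref{MSE} equals $M_k=\sigma_0^2\sigma^2/(T_{k,t}\sigma_0^2+\sigma^2)$ and the second equals $M_kR_k$ with $R_k=(1-w_{k,t})/\sum_{j=1}^K w_{j,t}$, so the lemma is equivalent to $K^{-1}\sigma_0^2/(\sigma_0^2+\sigma^2)\le R_k\le K^{-1}\sigma^2\sigma_0^{-2}$. Your proof of the upper bound (via $1-w_{k,t}\le\sigma^2/(\sigma_0^2+\sigma^2)$ and $w_{j,t}\ge\sigma_0^2/(\sigma_0^2+\sigma^2)$ for $T_{j,t}\ge 1$) is complete and correct. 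The paper itself supplies no argument here, only a citation to \cite{RB:21}, so on the upper bound you have actually done more than the paper.

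The lower bound is a genuine gap, and your own reduction shows why: it is not merely ``delicate,'' it is false as an unconditional statement. Take $K=2$, $\sigma_0^2=\sigma^2=1$, $T_{k,t}=2$, and let the other arm's count $T_{j,t}\to\infty$. Then $1-w_{k,t}=1/3$ while $\sum_j w_{j,t}\to 2/3+1=5/3$, so $R_k\to 1/5$, whereas $\beta_1-1=1/4$; directly, $\tau_{k,t}^2\to 2/5<5/12=M_k\beta_1$. Your attempted rescue---that the $\beta_1$ direction is only consumed for sub-optimal arms whose pull counts are controlled by \eqref{Tk_up1}---does not close the gap: in the counterexample $T_{k,t}$ is already at its minimum, and it is the \emph{other} arms' pulls inflating the denominator $\sum_j w_{j,t}$ that kill the bound, which Lemma \ref{TwoBoundOnC} does not control; moreover the paper also invokes $c_{k,n}\ge\sqrt{\beta_1}\tilde{c}_{k,n}$ for every arm and every round in the bound on $\prob{\bar{\mathcal{E}}}$ in \eqref{decomk-v2}, including the heavily pulled optimal arm. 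Deferring to ``the argument of Lemma~5 in \cite{RB:21}'' is not a proof, and since the inequality fails on concrete admissible configurations of $(T_{1,t},\dots,T_{K,t})$, no argument can establish it as stated: the only unconditional lower bound your (correct) identity yields is $\tau_{k,t}^2\ge M_k$, i.e.\ $\beta_1$ replaced by $1$, and recovering a factor strictly larger than $1$ requires additional hypotheses on the pull counts that neither the lemma nor its uses in the paper impose.
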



\subsection{Some Properties}
\label{sec:property}

{\bf Fact 1} (Gaussian tail bound)
Let $X$ be a Gaussian random variable, i.e., $X\sim \mathcal{N}(0,\sigma^2)$, 
then for all $\alpha>0$, 
\begin{equation*}
\text{Pr}(X\geq \alpha)\leq \exp\left(-\frac{\alpha^2}{2\sigma^2}\right).
\end{equation*}

{\bf Fact 2} (Joint distribution of ordered statistics)
Denote  $F(x)=\text{P}(X\leq x)$ and $f(x)$ as its density. 
Let $X_{(1)}\geq X_{(2)}\geq \dots \geq X_{(K)}$. 
For $x_1\geq x_2$, the density of $(X_{(1)},X_{(2)})$ is 
$$p(x_2,x_1)=n(n-1)F(x_2)^{n-2}f(x_2)f(x_1).$$

{\bf Fact 3} (Some results on Gaussian)
$$\frac{1}{\sqrt{2\pi}}\frac{t}{t^2+1}\exp(-t^2/2)\leq 1-\Phi(t)=\text{Pr}(X>t)\leq \frac{1}{t\sqrt{2\pi}}\exp(-t^2/2).$$
$$\text{Pr}(X_{(2)}<t)=[\Phi(t)]^n+n[1-\Phi(t)][\Phi(t)]^{n-1}.$$

\subsection{More Results of Experiments}
\label{sec:app-simulation}

\textbf{Performance of the two-stage algorithm}
We show the empirical studies of the two-stage algorithm. 
For overall checking the performance, we check it under various $q=0.1,0.2,\dots,0.9$, we report their performance under Setup F4 in Figure \ref{fig:twostage}. 
The figure shows the Two-Stage algorithm is bad under various $q$.

\begin{figure*}[!ht]
\centering
\begin{subfigure}[b]{0.495\columnwidth}
\includegraphics[keepaspectratio,width=1\linewidth]{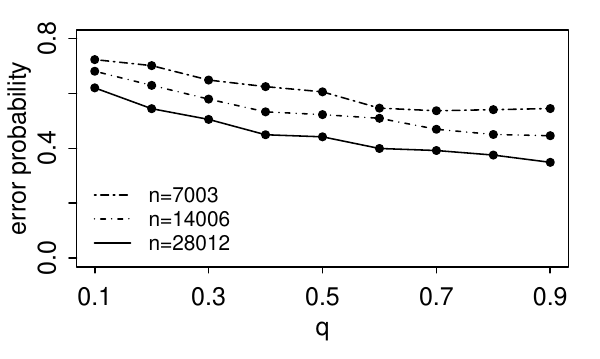}
\subcaption{Fixed Setup F4} 
\end{subfigure}
\begin{subfigure}[b]{0.495\columnwidth}
\includegraphics[keepaspectratio,width=1\linewidth]{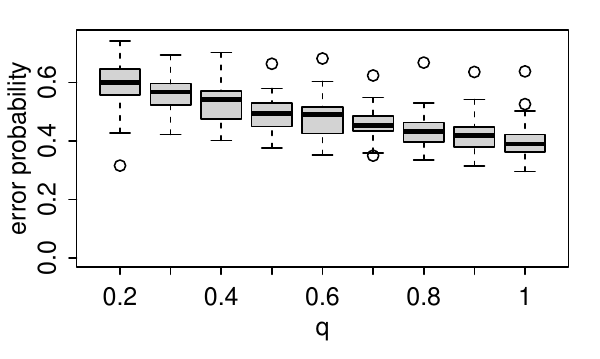}
\subcaption{Random Setup R2} 
\end{subfigure}
\caption{The error probability of the Two-Stage algorithm with various $q$ values, averaged over 1000 independent executions (results in standard deviations of less than 0.016).} 
\label{fig:twostage}
\end{figure*}

\textbf{Impact of arm number $K$.}
We ran the experiments with
$n = 20, 40, 80$ arms in order to examine how the
performance of each algorithm scales as the number
of arms grow. We report the result on the arithmetic setting (Setup F4) in Figure \ref{fig:MoreArms}, where $K=40$ and 80 are shown. 
Comparing various $K$, the benefit of using \rue increases as $K$ increases. 

\begin{figure*}[!ht]
\centering
\includegraphics[keepaspectratio,width=1\linewidth]{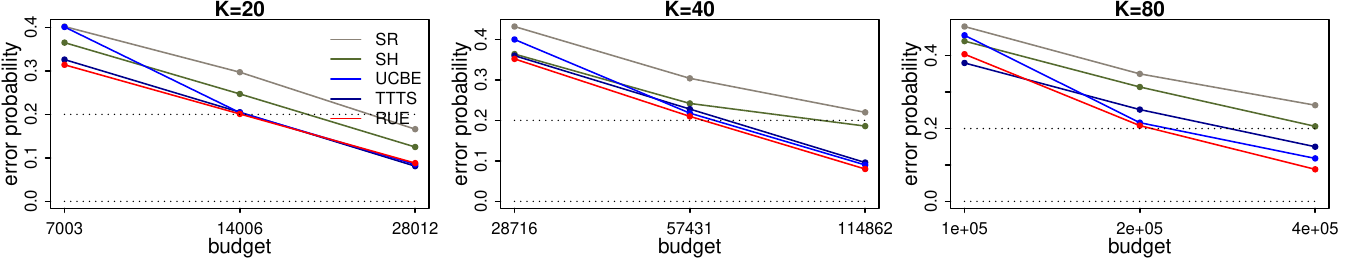}
\caption{The error probability of the different algorithms in Setup F4 with more arms, 40 and 80 arms (left and right subfigures respectively). The results are averaged over 1000 independent executions (all standard errors are less than 0.016 and not reported). For $K=80$, we set $N=400000$ as the maximal budget for the limit of resources when $2H$ is too big.}
\label{fig:MoreArms}
\end{figure*}


\end{document}